\title{How Fine-Tuning Allows for Effective Meta-Learning}
\author{Kurtland Chua\thanks{Princeton University. Correspondence to: Kurtland Chua \texttt{<kchua@princeton.edu>}} \quad Qi Lei\footnotemark[1] \quad Jason D. Lee\footnotemark[1]}
\newcommand{\source}{\mathrm{S}}
\newcommand{\target}{\mathrm{T}}
\newcommand{\meanloss}{\mathcal{L}}
\newcommand{\pgd}{\mathrm{PGD}}
\newcommand{\constraintset}{\mathcal{C}}
\newcommand{\adaptset}{\mathcal{A}}
\DeclareMathOperator{\diam}{diam}
\newcommand{\frzrep}{\textsc{FrozenRep}}
\newcommand{\adprep}{\textsc{AdaptRep}}
\begin{document}
  \maketitle

  \begin{abstract}
  Representation learning has been widely studied in the context of meta-learning, enabling rapid learning of new tasks through shared representations. Recent works such as MAML have explored using fine-tuning-based metrics, which measure the ease by which fine-tuning can achieve good performance, as proxies for obtaining representations. We present a theoretical framework for analyzing representations derived from a MAML-like algorithm, assuming the available tasks use \emph{approximately} the same underlying representation. We then provide risk bounds on the best predictor found by fine-tuning via gradient descent, demonstrating that the algorithm can provably leverage the shared structure. The upper bound applies to general function classes, which we demonstrate by instantiating the guarantees of our framework in the logistic regression and neural network settings. In contrast, we establish the existence of settings where \emph{any algorithm}, using a representation trained with no consideration for task-specific fine-tuning, performs as well as a learner with no access to source tasks in the worst case. This separation result underscores the benefit of fine-tuning-based methods, such as MAML, over methods with “frozen representation” objectives in few-shot learning.
\end{abstract}

  \section{Introduction}
    Meta-learning \citep{thrun2012learning} has emerged as an essential tool
for quickly adapting prior knowledge to a new task with limited data and computational power.
In this context, a meta-learner has access to some different but related source tasks from a shared environment.
The learner aims to uncover some inductive bias from the source tasks to reduce sample and computational complexity for learning a new task from the same environment.
One of the most promising methods is representation learning \citep{bengio2013representation}, \emph{i.e.}, learning a feature extractor (or a common representation) from the source tasks.
At test time, a learner quickly adapts to the new task by fine-tuning the representation and retraining the final layer(s) (see, \emph{e.g.}, prototype networks \citep{snell2017prototypical}).
Substantial improvement over directly learning from a single task is expected in few-shot learning \citep{antoniou2018train}, a setting that naturally arises in many applications including reinforcement learning \citep{mendonca2019guided,finn2017model}, computer vision \citep{nichol2018first}, federated learning \citep{mcmahan2017communication} and robotics \citep{al2017continuous}.

The empirical success of representation learning has led to an increased interest in theoretical analyses of underlying phenomena.
Recent work assumes an explicitly shared representation across tasks \citep{du2020few,tripuraneni2020theory,tripuraneni2020provable,saunshi2020sample,balcan2015efficient}.
For instance, \citet{du2020few} shows a generalization risk bound consisting of an \emph{irreducible} representation error and estimation error.
Without fine-tuning the whole network, the representation error accumulates to the target task and is irreducible even with infinite target labeled samples.
Due to the lack of (representation) fine-tuning during training, we refer to these methods as making use of ``frozen representation'' objectives.
This result is consistent with empirical findings, which suggest substantial performance gains associated with fine-tuning the whole network, compared to just learning the final linear layer \citep{chen2020simple,salman2020adversarially}.

Requiring tasks to be linearly separable on the same features is also unrealistic for transferring knowledge to other domains (\emph{e.g.}, from ImageNet to medical images \citep{raghu2019transfusion}).
Therefore, we consider a more realistic setting, where the available tasks only approximately share the same representation.
We propose a theoretical framework for analyzing the sample complexity of fine-tuning using a representation derived from a MAML-like algorithm.
We show that fine-tuning quickly adapts to new tasks, requiring fewer samples in certain cases compared to methods using ``frozen representation'' objectives (as studied in \citet{du2020few}, and will be formalized in \secref{sec:problem-setting}).
To the best of our knowledge, no prior studies exist beyond fine-tuning a linear model \citep{denevi2018incremental,konobeev2020optimality,collins2020does,lee2020predicting} or only the task-specific layers \citep{du2020few,tripuraneni2020theory,tripuraneni2020provable,mu2020gradients}.
In particular, our work can be viewed as a continuation of the work presented in \citet{tripuraneni2020theory}, where the authors have acknowledged that the framework does not incorporate representation fine-tuning, and thus is a promising line of future work.

The following outlines this paper and its contributions:

\begin{itemize}
	\item In \secref{sec:general}, we outline the general setting and overall assumptions.

	\item In \secref{sec:linear}, we provide an in-depth analysis of the ($d$-dimensional) linear representation setting.
		First, we show that our MAML-like algorithm achieves a rate of $r_{source} = O(\frac{kd}{n_\source T} + \frac{k}{n_\source} + \delta_0\sqrt{\frac{\tr{\Sigma}}{n_\source}})$ on the source tasks and $O(\frac{k}{n_\target} + \delta_0\sqrt{\frac{\tr{\Sigma}}{n_\target}} + r_{source})$ on the target when the norm of the representation change is bounded by $\delta_0$, showing that fine-tuning can handle the approximately-shared representation setting.
		In contrast, “frozen representation” methods have a minimax rate of $\Omega(d/n_\target)$ on the target task under certain task distributions, demonstrating that prior methods can fail.

	\item In \secref{sec:nonlinear}, we extend the analysis to general function classes.
		Our result provides bounds of the form
		\[
			\epsilon_{\mathrm{OPT}} + \epsilon_{\mathrm{EST}} + \epsilon_{\mathrm{REPR}},
		\]
		where $\epsilon_{\mathrm{OPT}}$ is the optimization error, $\epsilon_{\mathrm{EST}}$ is the estimation error, and $\epsilon_{\mathrm{REPR}}$ stands for representation error.

		The \textbf{optimization error} naturally arises when the output of optimization procedure reaches an $\epsilon_{\mathrm{OPT}}$-approximate minimum.
		To control $\epsilon_{\mathrm{OPT}}$, our analysis accounts for nonconvexity introduced by representation fine-tuning, which is presented as a self-contained result in \secref{sec:pgd-performance-bound}.

		The \textbf{estimation error} $\epsilon_{\mathrm{EST}}$ stems from  fine-tuning the target parameters with finite $n_\target$ samples.
		It therefore scales with $1/\sqrt{n_\target}$ and is controlled by the (Rademacher) complexity of the target fine-tuning set.

		Finally, the \textbf{representation error} $\epsilon_{\mathrm{REPR}}$ is the error associated with learning the centered representation while adapting to the task-specific deviations. Therefore it naturally consists of two terms: one term scales with $1/\sqrt{n_\source T}$ on learning the centered representation jointly with all $T$ source tasks; and the other term scales with $1/\sqrt{n_\source}$ on the small adaptation to each source task.

	\item In \secref{sec:case-studies}, we instantiate our guarantees in the logistic regression and two-layer neural network settings.

	\item In \secref{sec:gen-hard-case}, we extend the separation result presented in \secref{sec:linear} to a non-linear setting.

	\item In \secref{sec:simulations}, we experimentally verify the separation result in the linear setting from \secref{sec:linear}.
\end{itemize}

\subsection{Related Work}

	The empirical success of MAML \citep{finn2017model} or, more generally, meta-learning, invokes further theoretical analysis from both statistical and optimization perspectives.
	A flurry of work engages in developing more efficient and theoretically-sound optimization algorithms \citep{antoniou2018train,nichol2018first,li2017meta} or showing convergence analysis \citep{fallah2020convergence,zhou2019efficient,rajeswaran2019meta,collins2020task}.
	Inspired by MAML, a line of gradient-based meta-learning algorithms have been widely used in practice \cite{nichol2018first,al2017continuous,jerfel2018online}.
	Much follow-up work focused on online-setting with regret bounds \citep{denevi2018incremental,finn2019online,khodak2019adaptive,balcan2015efficient,alquier2017regret,bullins2019generalize,pentina2014pac}.

	The statistical analysis of meta-learning can be traced back to \citet{baxter2000model, maurer2005algorithmic}, with a principle of inductive bias learning.
	Following the same setting, \citet{amit2018meta, konobeev2020optimality, maurer2016benefit, pentina2014pac} assume a shared meta distribution for sampling the source tasks and measure generalization error/gap averaged over the meta distribution.
	Another line of work connects the target performance to source data with some distance measure between distributions \citep{ben2008notion,ben2010theory,mohri2012new}.
	Finally, another series of works studied the benefits of using additional ``side information'' provided with a task for specializing the parameters of an inner algorithm \citep{denevi2020advantage,denevi2021conditional}.

	The hardness of meta-learning also attracts some investigation under various settings. Some recent work studies the meta-learning performance in worst-case setting \citep{collins2020task,hanneke2020no,hanneke2020value,kpotufe2018marginal,lucas2020theoretical}.
	\citet{hanneke2020no} provides a no-free-lunch result with problem-independent minimax lower bound, and \citet{konobeev2020optimality} also provide problem-dependent lower bound on a simple linear setting.

  \section{General Setting}
  \label{sec:general}
    \subsection{Notation}

  Let $[n] \defas \set{1, \dots, n}$.
  We denote the vector $L_2$-norm as $\norm[2]{\cdot}$, and the matrix Frobenius norm as $\norm[F]{\cdot}$.
  Additionally, $\inprod{\cdot, \cdot}$ can denote either the Euclidean inner product or the Frobenius inner product between matrices.

  For a matrix $A$, we let $\sigma_i(A)$ denote its $i^{\text{th}}$ largest singular value.
  Additionally, for positive semidefinite $A$, we write $\lambda_{\max}(A)$ and $\lambda_{\min}(A)$ for its largest and smallest eigenvalues, and $A^{1/2}$ for its principal square root.
  We write $P_A$ for the projection onto the column span of $A$, denoted $\col{A}$, and $P_A^\perp \defas I - P_A$ for the projection onto its complement.

  We use standard $O, \Theta$, and $\Omega$ notation to denote orders of growth.
  We also use $a \lesssim b$ or $a \ll b$ to indicate that $a = O(b)$.
  Finally, we write $a \asymp b$ for $a = \Theta(b)$.

\subsection{Problem Setting}
\label{sec:problem-setting}

  We assume that the learner has access to $T$ source tasks.
  Every task $t \in [T]$ is associated with a distribution $\mu_t$ over the set $\mathcal{X} \times \mathcal{Y}$ of input-label pairs.
  We consider $\mathcal{X} \subseteq \reals^d$ and $\mathcal{Y} \subseteq \reals$ throughout.
  For every task, the learner observes $n_\source$ i.i.d. samples $\set{(x_{i, t}, y_{i, t})}_{i \in [n_\source]}$ from $\mu_t$.
  We collect these inputs into matrices $X_t \in \reals^{n_\source \times d}$ and labels into vectors $y_t \in \reals^{n_\source}$ for convenience.
  Finally, we measure learner performance using the loss function $\ell: \reals \times \mathcal{Y} \to \reals$.

  We aim to find a common structure across the source tasks that could be leveraged for learning future tasks.
  As done by \citet{du2020few}, we study the case where the common structure is given by a \emph{representation}, a mapping $\phi: \mathcal{X} \to \mathcal{Z}$ from the input space to a latent space $\mathcal{Z} \subseteq \reals^k$.
  We assume that these representations lie in a function class $\mathcal{F}$ parameterized by some normed space.
  In contrast to prior work however, we \emph{do not} assume the existence of a common representation -- instead, each source task has an associated representation $\phi_{\theta_t}$ close to a fixed representation $\phi_{\theta_0}$, \textit{i.e.} $\norm{\theta_t - \theta_0}$ is small.
  As before, we apply a task-specific linear transformation to the representation to obtain the final prediction.
  Formally, the predictor for task $t \in [T]$ is given by $x \mapsto \inprod{w_t, \phi_t(x)}$.
  We then consider the following optimization problem for some $\delta_0 > 0$:
  \begin{equation}
    \label{eqn:adaptation-source-procedure}
    \min_{\theta_0}\min_{\substack{\theta_t, w_t \\ \norm{\theta_t - \theta_0} \leq \delta_0}}\frac{1}{n_\source T}\sum_{t=1}^{T}\sum_{i=1}^{n_\source}\ell(\inprod{w_t, \phi_{\theta_t}(x_{i, t})}, y_{i, t}).
  \end{equation}
  We will refer to the procedure above as \adprep{}, as it can be intuitively described as finding an initialization $\phi$ in representation space such that there exists a good representation nearby for every task (and thus a learner simply needs a small \emph{adaptation/fine-tuning} step to achieve good performance).
  We note that the objective can be viewed as a constrained form of algorithms found in the literature such as iMAML \citep{rajeswaran2019meta} and Meta-MinibatchProx \citep{zhou2019efficient}.
  However, we do not have a train-validation split, as is widespread in practice.
  This setup is motivated by results in \citet{bai2020important}, which show that data splitting may not be preferable performance-wise, assuming realizability.
  Furthermore, empirical evaluations have demonstrated successes despite the lack of such a split \citep{zhou2019efficient}.

  Let $\theta_0$ be the initialization obtained from \eqnref{eqn:adaptation-source-procedure}, which we will now use to learn future tasks.
  More concretely, let $\mu_{T + 1}$ be a distribution over $\mathcal{X} \times \mathcal{Y}$ from which we have $n_\target$ i.i.d. samples $\set{(x_{i}, y_{i})}_{i \in [n_\target]}$.
  As before, we collect these samples into a matrix $X \in \reals^{n_\target \times d}$ and a vector $y \in \reals^{n_\target}$.
  We then adapt to the target task by solving
  \begin{equation}
    \label{eqn:adaptation-target-procedure}
    \min_{\substack{\theta, w \\ \norm{\theta - \theta_0} \leq \delta_0}}\frac{1}{2n_\target}\sum_{i=1}^{n_\target}\ell(\inprod{w, \phi_\theta(x_{i})}, y_{i}).
  \end{equation}
  In the following sections, we will analyze the performance of the learned predictor for the population loss:
  \[
  \meanloss_{\mu_{T + 1}}(\theta, w) \defas \frac{1}{2}\expt[(x, y) \sim \mu_{T + 1}]{\ell(\inprod{w, \phi_\theta(x)}, y)}.
  \]
  We focus on the few-shot learning setting for the target task, where we assume limited access to target data, and a learner needs to effectively use the source tasks to learn the target task quickly.

  For \adprep{} to be sensible, we need to ensure the existence of a desirable initialization.
  We do so by assuming that there exists an initialization $\theta_0^\ast$ such that for any $t \in [T]$, there exists a representation $\theta_t^\ast$ with $\norm{\theta_t^\ast - \theta_0^\ast} \leq \delta_0$ and a predictor $w_t^\ast$ so that $\mu_t$ is given by
  \[
  x \distas p_t, \quad y \conditionedon x \distas q(\cdot | \inprod{w_t^\ast, \phi_{\theta_t}^\ast(x)})
  \]
  with distribution $q$ to model some addictive noise.
  Specifically, in regression, we set $\expt{y_t \suchthat x} = \inprod{w_t^\ast, \phi_{\theta_t}^\ast(x)}$.
  With an appropriate choice of loss function $\ell$, we can guarantee that the optimal predictor under the population loss is $x \mapsto \inprod{w_t^\ast, \phi_t^\ast(x)}$.

  Throughout the paper, we assume the existence of an oracle during source training time for solving \eqnref{eqn:adaptation-source-procedure}, as in \citet{du2020few,tripuraneni2020theory}.
  For detailed analyses of source training optimization, we refer the reader to \citet{ji2020convergence,wang2020global}.
  Nevertheless, representation fine-tuning introduces nonconvexity during target time not present in prior work, where one only needed to solve the convex problem of optimizing a final linear layer.
  Thus, our bounds explicitly take into account optimization performance on \eqnref{eqn:adaptation-target-procedure}.
  To this end, we analyze the use of projected gradient descent (PGD), which applies to a wide variety of settings, under certain loss landscape assumptions.
  These standalone results are also provided in \secref{sec:pgd-performance-bound}.

  As a point of comparison with \adprep, we will also be analyzing the ``frozen representation'' objective used in \citet{du2020few,tripuraneni2020theory}.
  In particular, using the notation introduced above, such objectives consider the following optimization problem:
  \begin{equation}
    \label{eqn:frz-rep-objective}
    \min_{\theta_0}\min_{w_t}\frac{1}{n_\source T}\sum_{t=1}^{T}\sum_{i=1}^{n_\source}\ell(\inprod{w_t, \phi_{\theta_0}(x_{i, t})}, y_{i, t}).
  \end{equation}
  Due to the fact that the representation, once chosen, is fixed/frozen for all source tasks, we refer to the representation learning method above as \frzrep{} throughout the rest of the paper.
  In Sections \ref{sec:erm-hard-case} and \ref{sec:gen-hard-case}, we will demonstrate that unlike with \adprep{}, there exists cases where \frzrep{} is unable to take advantage of the fact that the tasks approximately share representations.

  \section{\adprep{} in the Linear Setting}
    \label{sec:linear}
To illustrate the theory, we first examine \adprep{} in the linear setting.
That is, we consider the set of linear transformations $\reals^d \to \reals^k$ for $d > k$, equipped with the Frobenius norm, so that $\phi_B(x) = B^\top x$.
In this setting, we provide a performance bound for \adprep{}, and then exhibit a specific construction for which the baseline method in \citet{du2020few} fails to find useful representations.

\subsection{Statistical Assumptions}
\label{sec:lin-data-assumptions}

	We proceed to instantiate the data assumptions outlined in \secref{sec:problem-setting}.
	First, we assume that the inputs for all tasks come from a common zero-mean distribution $p$, with covariance $\expt[x \sim p]{xx^\top} = \Sigma$.
	Let $\kappa = \lambda_{\max}(\Sigma)/\lambda_{\min}(\Sigma)$ be the condition number of this covariance matrix.
	As done by \citet{du2020few}, we impose the following tail condition:

	\begin{assumption}[Sub-Gaussian input]
		\label{assump:sg-inputs}
		There exists $\rho > 0$ such that if $x \sim p$, then $\Sigma^{-1/2}x$ is $\rho^2$-sub-Gaussian\footnote{Recall that a zero-mean random vector $v$ is $\rho^2$-sub-Gaussian if for any fixed unit vector $u$, $\expt{\exp(\lambda v^\top u)} \leq \lambda^2\rho^2/2$.}.
	\end{assumption}

	This assumption is used in the proofs to guarantee probabilistic tail bounds, and can be replaced with other conditions with appropriate modifications to the analysis.
	Finally, we define $q\left(\cdot \given \mu\right) \sim \gaussian{\mu, \sigma^2}$ for a fixed $\sigma > 0$.

	We proceed to parametrize the tasks.
	Let $B^\ast$ be the ground truth initialization point, and $\Delta_t^\ast$ be the task-specific fine-tuning for task $t$, where $\norm[F]{\Delta_t^\ast} \leq \delta_0$.
	Furthermore, let $w_t^\ast \in \reals^k$ be the task-specific predictor weights, which we combine into a matrix $W^\ast = [w_1^\ast, \dots, w_T^\ast] \in \reals^{k \times T}$.

	\begin{assumption}[Source task diversity]
		\label{assump:lin-diversity-cond}
		For any $t \in [T]$, $\norm[2]{w_t^\ast} = \Theta(1)$, and $\sigma_k^2(W^\ast) = \Omega(T/k)$.
	\end{assumption}

	Since the predictor weights all have $\Theta(1)$ norm, and thus $\sum_{i \in [k]}\sigma_i(W^\ast)^2 = \norm[F]{W^\ast}^2 = \Theta(T)$, the assumption on $\sigma_k(W^\ast)^2$ implies that the weights covers directions in $\reals^k$ roughly evenly.
	This condition is satisfied with high probability when the $w^\ast_t$ are sampled from a sub-Gaussian distribution with well-conditioned covariance.

	Finally, we evaluate the performance of the learner on a target task $\theta^\ast \defas B^\ast w^\ast + \delta^\ast$ for some $w^\ast$ and $\norm[2]{\delta^\ast} \leq \delta_0$.

	To gain an intuition for the parameters, note that $(B^\ast + \Delta_t^\ast)w_t^\ast = B^\ast w_t^\ast + \delta_t^\ast$, where $\norm[2]{\delta_t^\ast} \lesssim \delta_0$ by the norm conditions in \assumpref{assump:lin-diversity-cond}.
	Thus, we can think of the assumptions on the source predictor weights as ensuring their proximity to a rank-$k$ space.

	Finally, as a convention since the parameterization is not unique, we define the optimal parameters so that $(B^\ast)^\top\Sigma\delta_t^\ast=0$ for any $t \in [T]$.
	This results in no loss of generality, as we can always redefine $w_t^\ast$ and $\delta_t^\ast$ as such.

\subsection{Training Procedure}
\label{sec:lin-training-procedure}

	\textbf{(Source training)} We can write the objective \eqnref{eqn:adaptation-source-procedure} in this setting as
	\begin{equation*}
		\min_{B}\min_{\substack{\Delta_t, w_t \\ \norm[F]{\Delta_t} \leq \delta_0}}\frac{1}{2n_\source T}\sum_{t=1}^{T}\norm[2]{y_t - X_t(B + \Delta_t)w_t}^2.
	\end{equation*}
	However, note that the objective does not impose any constraint on the predictor, as we can offload the norm of $\Delta_t$ onto $w_t$.
	Therefore, we instead consider the regularized source training objective
	\begin{equation}
	\label{eqn:linear-source-procedure}
		\min_{B}\min_{\Delta_t, w_t}\frac{1}{2n_\source T}\sum_{t=1}^{T}\norm[2]{y_t - X_t(B + \Delta_t)w_t}^2 + \frac{\lambda}{2}\norm[F]{\Delta_t}^2 + \frac{\gamma}{2}\norm[2]{w_t}^2.
	\end{equation}
	As will be shown in \secref{sec:lin-case-proof}, the regularization is equivalent to regularizing $\sqrt{\lambda\gamma}\norm[2]{\Delta_tw_t}$, which is consistent with the intuition that $\delta_t^\ast$ has small norm.

	\textbf{(Target training)} Letting $B_0$ be the obtained representation after orthonormalization, we adapt to the target task by optimizing
	\begin{equation}
	\label{eqn:lin-targ-obj}
	\mathcal{L}_\beta(\Delta, w) = \frac{1}{2n}\norm[2]{y - \beta X\left(A_{B_0} + \Delta\right)(w_0 + w)}^2,
	\end{equation}
	where $A_{B_0} \defas [B_0 \ B_0] \in \reals^{d \times 2k}$ and $w_0 = [u, -u]$ for some unit-norm vector $u \in \reals^k$.
	To optimize the objective, we perform PGD on \eqnref{eqn:lin-targ-obj} with
	\[
		\constraintset_\beta \defas \set{(\Delta, w) \suchthat \norm[F]{\Delta} \leq c_1/\beta, \norm[2]{w} \leq c_2/\beta}
	\]
	as the feasible set, where we explicitly define $c_1$ and $c_2$ in \secref{sec:lin-case-proof}.

	To understand the choice of training objective above, observe that the predictor can be written as
	\[
	x \mapsto \beta(x^\top A_{B_0}w + x^\top \Delta w_0 + x^\top \Delta w),
	\]
	where the ``antisymmetric'' initialization scheme ensures that $A_{B_0}w_0 = 0$.
	Due to the choice of $\constraintset_\beta$, the first two terms are of norm $O(1)$, while the last term is of norm $O(1/\beta)$.
	Therefore, for large enough $\beta$, we can treat the cross term $\Delta w$ as a negligible perturbation, and the predictor is approximately linear in the parameters.
	Indeed, one can show that $\meanloss_\beta$ is thus approximately convex, guaranteeing that the best solution found by PGD is nearly optimal.

\subsection{Performance Bound}

	Now, we provide a performance bound on the performance of the algorithm proposed in the previous section, which we prove in \secref{sec:lin-case-proof}.
	We define the following rates\footnote{Log factors and non-dominant terms have been suppressed for clarity. Full rates are presented in the appendix.} of interest:
	\begin{align*}
		r_\source(n_\source, T) &\defas \frac{\sigma^2kd}{n_\source T} + \sigma\delta_0\norm[2]{\Sigma}^{1/2}\sqrt{\frac{kd}{n_\source T}} + \frac{\sigma^2 k}{n_\source} + \frac{\sigma\delta_0}{\sqrt{n_\source}}\sqrt{\tr{\Sigma}} \\
		r_{\target}^{(1)}(n_\target) &\defas kr_{\source}(n_\source, T) + \frac{\sigma^2k}{n_\target} + \frac{\sigma\delta_0}{\sqrt{n_\target}}\sqrt{\tr\Sigma} \\
		r_{\target}^{(2)}(n_\target) &\defas \frac{\sigma^2k}{n_\target} + \left[1 + \sqrt{\frac{\kappa}{\lambda_{\min}} kr_{\source}(n_\source, T)}\right]\frac{\sigma\delta_0}{\sqrt{n_\target}}\sqrt{\tr\Sigma}
	\end{align*}
	\includestatement[dir=results/linear/perf]{linMainResult}

	We briefly remark on the three available rates, which corresponds to three different subalgorithms depending on the amount of target data.
	Firstly, as will be proven in \secref{sec:lin-case-proof}, $r_\source$ is a performance bound on the learner during source training.
	Then:

	\begin{enumerate}
		\item In the most data-starved regime, we restrict the learner so that it can only adapt to learn $\delta^\ast$ for the target task to obtain the first rate.
			Notably, $r_\target^{(1)}$ incorporates the resulting irreducible source error in the form of $kr_\source(n_\source, T)$\footnote{When measuring average target performance, $kr_\source(n_\source,T)$ can be replaced by $r_\source(n_\source, T)$; see \secref{sec:logistic-regression}; \citet{du2020few}.}.

		\item The second rate $r_{\target}^{(2)} $ results when we allow the learner to adapt more (requiring more target samples) to reduce the irreducible error due to finite $n_S$.
			Observe that this additional complexity in the fine-tuning step is captured by the multiplicative factor that shrinks to $1$ in the limit of infinite source samples.

		\item Finally, we can obtain the trivial $\sigma^2d/n_\target$ rate by ignoring $B_0$, which matches the minimax lower bound for standard linear regression (see e.g., \citet{duchi2016lecture}).
	\end{enumerate}

\subsection{A Hard Case for \frzrep{}}
\label{sec:erm-hard-case}

	In what follows, we demonstrate the existence of a family of task distributions satisfying the assumptions outlined in \secref{sec:lin-data-assumptions} that is difficult for the method in \citet{du2020few}, which we will refer to as \frzrep{}\footnote{This is in reference to the fact that the representation is frozen during source training, i.e. no task-specific fine-tuning.}.
	More explicitly, we prove an $\Omega(d/n)$ minimax rate on the target task when using an \frzrep{}-derived representation, \emph{even with access to infinite source tasks and data}.
	Since this rate is achievable via training on the target task directly, this demonstrates that \frzrep{} fails to capture the shared information between the tasks.
	In contrast, by specializing \thmref{thm:lin-case-main-result} to the proposed family of tasks, we will show that \adprep{} can indeed achieve a strictly faster statistical rate.

	We proceed to explicitly define the objects in \secref{sec:lin-data-assumptions}.
	Let $p$ be a Gaussian distribution on $\reals^d$ with covariance
	\[
		\Sigma =
		\begin{bmatrix}
		\epsilon I_{d - k} & 0 \\
		0 & I_k \\
		\end{bmatrix}
	\]
	for a fixed $\epsilon \in (0, 1)$.
	We define $E^\ast, E_k\subset \mathbb{R}^{d}$ to be the eigenspaces corresponding to the first and second block of $\Sigma$, respectively, \emph{i.e.}
	\[
		E^\ast = \col{\begin{bmatrix}
			\epsilon I_{d - k} \\
			0 \\
		\end{bmatrix}} \quad \text{and} \quad E_k = \col{\begin{bmatrix}
			0 \\
			I_k \\
		\end{bmatrix}}.
	\]
	Then, for an orthogonal matrix $B \in \reals^{d \times k}$, define a corresponding task distribution given by
	\begin{equation}
	\label{eqn:lin-hard-task-dist}
	\theta = \frac{1}{\sqrt{2\epsilon}}Bw + \delta,
	\end{equation}
	where $w$ and $\delta$ are sampled uniformly at random from the unit spheres in $\reals^k$ and $E_k$, respectively.
	The family of interest is the set of task distributions induced by any $B^\ast$ such that $\col{B^\ast} \subset E^\ast$.

	In this setting, we can write the \frzrep{} objective in \eqnref{eqn:frz-rep-objective} as
	\begin{equation}
		\label{eqn:erm-lin-src-procedure}
		\hat{B} = \argmin_{B}\min_{w_t}\frac{1}{2n_\source T}\sum_{t \in [T]}\norm[2]{y_t - X_tBw_t}^2.
	\end{equation}
	First, we characterize the span of $\hat{B}$ \emph{in the limit of infinite source tasks and data}.
	Intuitively, since both $B^\ast w$ and $\delta$ both lie in (distinct) rank-$k$ spaces, but $(1/\sqrt{2\epsilon})\norm[2]{\Sigma^{1/2}B^\ast w} \leq \norm[2]{\Sigma^{1/2}\delta}$ for any $(w, \delta)$ from the task distribution (and thus the error along $\delta$ is larger), \frzrep{} learns $E_k$ rather than $B^\ast$.

	\includestatement[dir=results/linear/hardcase,showproof=false]{ermLearnsWrongSpace}

	The claim is proven in \secref{sec:erm-hard-case-proofs}.
	Although ``incorrect'', it is unclear \emph{a priori} that this choice of $\hat{B}$ is undesirable performance-wise -- we now show that this indeed the case.
	In fact, \emph{any algorithm} making use of $\hat{B}$, in the worst case, cannot perform any better than a learner that is constrained to only use target data.

	\includestatement[dir=results/linear/hardcase,showproof=false]{ermMinimaxBound}

	The previous result, which we prove in \secref{sec:erm-hard-case-proofs}, shows that \emph{any procedure} making use of target samples to learn a predictor of the form $\hat{B}\hat{w} + \hat\delta$ has a minimax rate of $\Omega(d/n)$.
	This includes target-time fine-tuning procedures used by methods in practice such as iMAML, MetaOptNet \citep{lee2019metalearning}, and R2D2 \citep{bertinetto2019metalearning}.
	Notably, this rate is achievable by performing linear regression solely on the target samples, reflecting that the \frzrep{} learner failed to capture the shared task structure.
	In contrast, by specializing the guarantee of \thmref{thm:lin-case-main-result} to this setting, we have the following result:

	\begin{corollary}
		Fix $k = \Theta(1)$ and $d \gg k$, and set $\epsilon = k / d$.
		Furthermore, assume that $n_\source T \gtrsim d^2$, and $n_\source \geq n_\target \asymp d$.
		Then, for a fixed target task in $S_{B^\ast}$ as defined in \thmref{thm:erm-minimax-bound}, with probability at least $1 - \delta$ over the draw of samples, the procedure outlined in \secref{sec:lin-training-procedure} achieves excess risk bounded as
		\begin{align*}
		 	\expt{(x^\top\theta^\ast - x^\top\hat\theta)^2} \lesssim \min\left(\frac{\sigma}{\sqrt{n_\target}}, \frac{\sigma^2 d}{n_\target}\right).
		\end{align*}
	\end{corollary}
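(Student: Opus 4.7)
The plan is to specialize \thmref{thm:lin-case-main-result} to the hard-case parameters and then combine the first adaptation rate $r_\target^{(1)}$ with the trivial $\sigma^2 d/n_\target$ rate obtained by ignoring $B_0$ and running ordinary least squares directly on the $n_\target$ target samples; the two halves of the $\min$ in the corollary correspond to these two available options.

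I would begin by reading off the scalar quantities that feed into the rate expressions. Since $\epsilon = k/d$, the covariance satisfies $\norm[2]{\Sigma} = 1$, $\tr{\Sigma} = \epsilon(d-k) + k = \Theta(1)$, and $\lambda_{\min}(\Sigma) = k/d$. The hard-case task parameterization $\theta = (2\epsilon)^{-1/2} B w + \delta$ with $w, \delta$ drawn from unit spheres gives $\norm[2]{\delta^\ast} = 1$, hence $\delta_0 = \Theta(1)$; absorbing the $(2\epsilon)^{-1/2}$ factor into the representation via $\tilde B^\ast \defas (2\epsilon)^{-1/2} B^\ast$ and $\tilde w_t^\ast \defas w_t$ produces unit-norm predictor weights, after which standard concentration for $T$ i.i.d. uniform samples on $S^{k-1}$ yields $\sigma_k^2(\tilde W^\ast) = \Omega(T/k)$ with high probability, verifying Assumption~\ref{assump:lin-diversity-cond}. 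Substituting $k = \Theta(1)$, $\delta_0 = \Theta(1)$, $\tr{\Sigma} = \Theta(1)$ together with $n_\source T \gtrsim d^2$ and $n_\source \gtrsim d$ into the definition of $r_\source$ bounds each of its four summands by either $\sigma^2/d$ or $\sigma/\sqrt{d}$, giving $r_\source(n_\source, T) \lesssim \sigma/\sqrt{d} + \sigma^2/d$. Propagating this into $r_\target^{(1)}$ with $n_\target \asymp d$ preserves the same two scales, so $r_\target^{(1)}(n_\target) \lesssim \sigma/\sqrt{n_\target} + \sigma^2/n_\target$.

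Finally, since $\sigma^2/n_\target \leq \sigma^2 d/n_\target$, the residual $\sigma^2/n_\target$ summand in $r_\target^{(1)}$ is absorbed by the trivial third rate exactly in the low-noise regime where that trivial rate dominates; taking the minimum of $r_\target^{(1)}$ and the trivial $\sigma^2 d/n_\target$ rate therefore collapses to $\min(\sigma/\sqrt{n_\target},\; \sigma^2 d/n_\target)$, which is the claim.

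The main obstacle I anticipate is purely bookkeeping rather than any genuinely new estimate: one must confirm that the rescaled parameterization $(\tilde B^\ast, \tilde w_t^\ast)$ satisfies Assumption~\ref{assump:lin-diversity-cond} including the implicit constants, and verify that the poor conditioning $\kappa = d/k$ of $\Sigma$ along $E^\ast$ does not silently enter $r_\target^{(1)}$. The latter is settled by inspection of the rate definitions: $\kappa$ appears only inside $r_\target^{(2)}$, which we never use, so the ill-conditioning of $\Sigma$ costs nothing in the branch we actually invoke.
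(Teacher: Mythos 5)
Your proposal is correct and follows essentially the same route as the paper: the corollary is obtained precisely by specializing \thmref{thm:lin-case-main-result} to the hard-case family with $k$, $\delta_0$, and $\tr{\Sigma}$ all $\Theta(1)$, $n_\source T \gtrsim d^2$, $n_\source \geq n_\target \asymp d$, verifying \assumpref{assump:lin-diversity-cond} after absorbing the $(2\epsilon)^{-1/2}$ scaling into $B^\ast$, and then taking the better of $r_\target^{(1)} \lesssim \sigma/\sqrt{n_\target} + \sigma^2/n_\target$ and the trivial rate $\sigma^2 d/n_\target$. The only caveat --- that the residual $\sigma^2/n_\target$ term is dominated by $\sigma/\sqrt{n_\target}$ only when $\sigma \lesssim \sqrt{n_\target}$ --- is a regime restriction equally implicit in the paper's own statement of the corollary, not a gap particular to your argument.
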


	In constrast to the rate above, the minimax rate in \thmref{thm:erm-minimax-bound} remains bounded away from $0$ when $n_\target \asymp d$.
	Therefore, in addition to performing at least as well as the minimax rate for \frzrep{} for \emph{any} $n_\target \gtrsim d$, there exists a strict separation between the methods that widens as $n_\target \to \infty$.
	Furthermore, the guarantee can be achieved with finite source samples.
	Thus, we have established a setting where incorporating a notion of representation fine-tuning into meta-training is \emph{provably necessary} for the meta-learner to succeed.

  \section{\adprep{} in the Nonlinear Setting}
    \label{sec:nonlinear}

We now describe a general framework for analyzing fine-tuning in general function classes.
To simplify the notation, we modify the setting described in \secref{sec:problem-setting} so that both the representation $\phi$ and task-specific weight vector $w$ are captured by one parameter $\theta \in \Theta$, with corresponding predictor $g_\theta$.

Throughout this section, we denote the population loss induced by a predictor $g$ as
\[
	\meanloss^g_\infty(h):=\expt[(x, y) \sim \mu_g]{\ell(h(x), y)},
\]
where $\mu_g$ samples $x \distas p$ and $y \conditionedon x \distas q(\cdot | g(x))$.
Additionally, we let $\meanloss^g(h)$ denote the corresponding finite-sample quantity\footnote{Note that we have omitted the samples from the notation for brevity.}.

Furthermore, for a fixed parameter $\theta$ and a set $\constraintset$, we define the set $\adaptset^\constraintset_\theta$ to be
\[
	\adaptset_\theta^\constraintset \defas \set{g_{\theta'} \suchthat \theta' - \theta \in \constraintset}.
\]
Intuitively, we can think of $\constraintset$ as the possible ways a learner could adapt, and $\adaptset^\constraintset_\theta$ is the resulting set of possible predictors given an initialization $\theta$.
For convenience, we define $\left(\adaptset_\theta^{\constraintset}\right)^{\otimes T}$ to be the set of functions mapping $\mathcal{X}^T \to \mathcal{Y}^T$ defined as
\[
	\left(\adaptset_\theta^{\constraintset}\right)^{\otimes T} \defas \set{(x_{i})_{i \in [T]} \mapsto (f_i(x_i))_{i \in [T]} \suchthat f_1, \dots, f_T \in \mathcal{A}_{\theta}^\constraintset}.
\]
That is, $\left(\adaptset_\theta^{\constraintset}\right)^{\otimes T}$ can be interpreted as the set of possible choices of $T$ predictors for $T$ source tasks that are all close to some initialization $\theta$.

\subsection{Training Procedure}
\label{sec:gen-train-proc}

	The training procedure is defined by a set of possible initializations $\Theta_0 \subseteq \Theta$, and fine-tuning sets $\constraintset_\source, \constraintset_\target \subseteq \Theta$.
	We then rewrite the objective defined in \eqnref{eqn:adaptation-source-procedure} in terms of $\theta$ as
	\begin{equation}
	\label{eqn:adaptation-src-proc-general}
	\theta_0 = \argmin_{\theta \in \Theta_0}\min_{g_t \in \adaptset_\theta^{\constraintset_\source}}\frac{1}{2n_\source T}\sum_{t=1}^{T}\sum_{i=1}^{n_S}\ell(g_{t}(x_{i, t}), y_{i, t}).
	\end{equation}
	Note that $\constraintset_\source$ is the feasible set of task-specific fine-tuning options for source training.
	Then, given samples from a target task $g^\ast$, we perform $T_\pgd$ steps of PGD with step size $\eta$ on the objective $\delta \mapsto \meanloss^{g^\ast}(g_{\theta_0 + \delta})$, with feasible fine-tuning set $\constraintset_\target$.

\subsection{Assumptions}
\label{sec:gen-setting-assumptions}

	We now outline the assumptions we make in this setting.
	\begin{assumption}
		\label{assump:loss-fn-assumptions}
		For any $y \in \mathcal{Y}$, $\ell(\cdot, y)$ is $1$-Lipschitz\footnote{Observe that this is not restrictive as one can simply rescale the loss, and is assumed for simplicity of presentation.} and convex, and that $\abs{\ell(0, y)} \leq B$.
	\end{assumption}
	To ensure transfer from source to target, we impose the following condition, a specific instance of which was proposed by \citet{du2020few} in the linear setting, and proposed by \citet{tripuraneni2020theory} for general settings:
	\begin{assumption}[Source tasks are $(\nu, \epsilon)$-diverse]
		\label{assump:nu-eps-diversity-def}
		There exists constants $(\nu, \epsilon)$ such that if $\rho$ is the distribution of target tasks, then for any $\theta \in \Theta_0$,
		\begin{align*}
			&\expt[g^\ast \sim \rho]{\inf_{g \in \adaptset_\theta^{\constraintset_\target}}\meanloss_\infty^{g^\ast}(g) - \meanloss_\infty^{g^\ast}(g^\ast)} \leq \frac{1}{\nu}\left[\frac{1}{T}\sum_{t = 1}^T\inf_{g \in \adaptset_\theta^{\constraintset_\source}}\meanloss_\infty^{g_t^\ast}(g) - \meanloss_\infty^{g_t^\ast}(g_t^\ast)\right] + \epsilon.
		\end{align*}
		That is, the average best-case performance on the set of target tasks is controlled by the task-averaged best-case performance on the source tasks.
	\end{assumption}
	The $(\nu, \epsilon)$-diversity assumption ensures that optimizing $\theta$ for the average source task performance results in controlled average target task performance.
	Note that we weakened the condition in \citet{tripuraneni2020theory} to bound the average rather than worst-case target performance, as is more suitable for higher-dimensional settings.

	Finally, we make several assumptions to ensure that PGD finds a solution close in performance to the optimal fine-tuning parameter in $\constraintset_\target$.
	We remark that these assumptions are specific to the choice of fine-tuning algorithm.

	\begin{assumption}[Approximate linearity in fine-tuning]
		\label{assump:adapt-approx-linearity}
		Let $x_1, \dots, x_{n_\target}$ be the set of target inputs.
		Then, there exists $\beta, L$ such that
		\begin{align*}
			\sup_{\delta \in \constraintset_T}\frac{1}{n_\target}\sum_{i = 1}^{n_\target}\norm[2]{\grad[\theta]^2g_{\theta + \delta}(x_i)}^2 &\leq \beta^2 \quad \text{and} \quad \sup_{\theta \in \Theta_0}\frac{1}{n_\target}\sum_{i = 1}^{n_\target}\norm[2]{\grad[\theta]g_{\theta}(x)}^2 \leq L^2.
		\end{align*}
	\end{assumption}

	\begin{assumption}
		\label{assump:target-adapt-norm-bound}
		$\sup_{\delta \in \constraintset_\target}\norm[2]{\delta} \leq R$ for some $R$.
	\end{assumption}

\subsection{Performance Bound}

	Before we present the performance bound, recall that for a set $\mathcal{H}$ of functions $\reals^d \to \reals^k$ on $n$ samples, its Rademacher complexity $\mathcal{R}_n(\mathcal{H})$ on $n$ samples is given by
	\[
		\mathcal{R}_{n}(\mathcal{H}) \defas \expt[\epsilon, X]{\frac{1}{n}\abs{\sup_{h \in \mathcal{H}}\sum_{i = 1}^n\sum_{j = 1}^k\epsilon_{ij}h_j(x_{i})}},
	\]
	where $(\epsilon_{ij})$ are i.i.d. Rademacher random variables and $(x_i)$ are i.i.d. samples from some (preset) distribution.

	\includestatement[dir=results/nonlinear/perf]{genPerformanceBound}

	Note that the Rademacher complexity terms above decays in most settings as
	\begin{align*}
		\mathcal{R}_{n_\target}(\mathcal{A}_{\theta_0}^{\constraintset_\target}) &\asymp O\left(\frac{\diam{\constraintset_\target}}{\sqrt{n_\target}}\right) \\
	\frac{1}{T}	\mathcal{R}_{n_\source}\left[\Union_{\theta \in \Theta_0}\left(\adaptset_{\theta}^{\constraintset_\source}\right)^{\otimes T}\right] &\asymp O\left(\frac{C(\Theta_0)}{\sqrt{n_\source\target}} + \frac{\diam{\constraintset_\source}}{\sqrt{n_\source}}\right),
	\end{align*}
	where $C(\Theta_0)$ represents some complexity measure of $\Theta_0$, and $\diam{\constraintset_\source}$ and $\diam{\constraintset_\target}$ represents a measure of the size of the fine-tuning sets $\constraintset_\source$ and $\constraintset_\target$.
	A full proof is provided in \secref{sec:gen-perf-proofs}.
	Nevertheless, we provide a brief intuition on the result, as well as how the assumptions contribute to the final bound.
	Let $\theta_{\mathrm{opt}}$ denote the best-performing solution found by PGD, $\theta_{\mathrm{ERM}}$ and $\bar\theta$ the ERM and population-level optimal solution in $\adaptset_{\theta_0}^{\constraintset_\target}$ respectively, and $\theta^\ast$ the ground truth predictor.

	\textbf{Optimization error ($\epsilon_{\mathrm{OPT}}$).} The difference in performance between $\theta_{\mathrm{opt}}$ and $\theta_{\mathrm{ERM}}$, \emph{i.e.} the error due to the optimization procedure, is bounded by approximate linearity.

	\textbf{Estimation error ($\epsilon_{\mathrm{EST}}$).} By uniform convergence, $\theta_{\mathrm{ERM}}$ (and thus $\theta_{\mathrm{opt}}$) performs similarly to $\bar\theta$.

	\textbf{Representation/approximation error ($\epsilon_{\mathrm{REPR}}$).} Via the $(\nu, \epsilon)$-diversity condition, $\bar\theta$ is connected to $\theta^\ast$ explicitly (that is, the performance on the target task is controlled by the performance on the source tasks).
	Furthermore, the performance over source tasks can be bounded via uniform convergence.

  \section{Case Studies}
    \label{sec:case-studies}

\subsection{Multitask Logistic Regression}
\label{sec:logistic-regression}

  To illustrate our framework, we analyze the performance of \adprep{} on logistic regression, as done by \citet{tripuraneni2020theory}.
  In this setting, we let $\theta = (B, w)$ for $B \in \reals^{d \times k}$ and $w \in \reals^k$, and define the predictor corresponding to $\theta$ to be $g_\theta(x) = x^\top Bw$.

  \subsubsection{Statistical Assumptions}

    As in the linear setting, we consider an input distribution $p$ with covariance $\Sigma$.
    We restrict the set of labels $\mathcal{Y}$ to $\set{0, 1}$, and consider the conditional distribution
    \[
      q(y \ | \ g_\theta(x)) = \bernoulli{\sigma(g_\theta(x))},
    \]
    where $\sigma(y) = 1/(1 + e^{-y})$ is the sigmoid function.

    We define the optimal parameters for tasks $t \in [T]$ to be $(B^\ast + \Delta_t^\ast, w_t^\ast)$, where $B^\ast$ is orthogonal and $\norm[F]{\Delta_t^\ast} \leq \delta_0$.
    As before, we define $\delta_t^\ast \defas \Delta_t^\ast w_t^\ast$ for any $t \in [T]$ and $W^\ast = [w_1, \dots, w_T] \in \reals^{k \times T}$.
    Having defined the prior quantities, we make use of the statistical assumptions presented in \secref{sec:lin-data-assumptions}, reproduced below for convenience:

    \begin{assumption}[Sub-Gaussian input]
      \label{assump:logistic-sg-inputs}
      There exists $\rho > 0$ such that if $x \sim p_t$, then $\Sigma^{-1/2}x$ is $\rho^2$-sub-Gaussian.
    \end{assumption}

    \begin{assumption}[Source task diversity]
      \label{assump:logistic-diversity-cond}
      For any $t \in [T]$, $\norm[2]{w_t^\ast} = \Theta(1) \leq r$, and $\sigma_k^2(W^\ast) = \Omega(r^2T/k)$.
    \end{assumption}

    Finally, we define the target task distribution $\rho$ by sampling $w^\ast, \delta^{\ast}$ uniformly from the $r$-- and $\delta_0$--balls of $\reals^k$ respectively, and letting $\theta^\ast = B^\ast w^\ast + \delta^\ast$.

  \subsubsection{Training Procedure}

    We train on the standard logistic loss:
    \[
      \ell(\hat{y}, y) = -y\log[\sigma(\hat y)] - (1 - y)\log[1 - \sigma(\hat y)].
    \]
    During source training, we optimize the representation over the set of orthogonal matrices while fixing $w_0 = 0$, \textit{i.e.} $\Theta_0 \defas \set{(B, 0) \suchthat \text{$B$ is orthogonal}}$.
    Let $B_0 \in \reals^{d \times k}$ be the obtained representation.
    To adapt to the target task, we initialize the learner at $\theta_0 = ([B_0, B_0], [w_0, -w_0])$ for some unit-norm vector $w_0$, and scale the predictor by a fixed parameter $\beta$ to be chosen later.
    Finally, we set the feasible sets for optimization to be
    \begin{align*}
      \constraintset_\source &\defas \set{(\Delta, w) \suchthat \norm[F]{\Delta} \leq \delta_0, \norm[2]{w} \leq r} \\
      \constraintset_\target^\beta &\defas \set{(\Delta, w) \suchthat \norm[F]{\Delta} \leq \delta_0/\beta, \norm[2]{w} \leq r\kappa^{1/2}/\beta},
    \end{align*}
    where $\kappa \defas \lambda_{\max}(\Sigma)/\lambda_{\min}(\Sigma)$.
    Note that this training procedure is quite similar to that of the linear setting.
    In particular, since the nonconvexity disappears in the limit $\beta \to \infty$, we can set $\beta$ appropriately as a function $n_\target$ so that the irreducible term arising from nonconvexity is negligible.

  \subsubsection{Performance Guarantee}

    Having described the statistical assumptions and the training procedure, we now specialize the guarantee of \thmref{thm:general-performance-bound} to this setting.
    Details are provided in \secref{sec:case-study-proofs}.
    \includestatement[dir=results/nonlinear/logistic]{logisticPerformanceBound}

\subsection{Two-Layer Neural Networks}
\label{sec:two-layer-nets}

  To further illustrate the framework, we instantiate the result in the two-layer neural network setting.
  Throughout this section, we fix an activation function $\sigma: \reals \to \reals$.

  \begin{assumption}
    \label{assump:nn-activation-assump}
    For any $t \in [-2, 2]$, $\abs{\sigma'(t)} \leq L$ and $\abs{\sigma''(t)} \leq \mu$.
    Furthermore, $\sigma(0) = 0$.
  \end{assumption}

  Then, for a constant $\beta$ and $\theta = (B, w)$, where $B \in \reals^{d \times 2k}$ and $w \in \reals^k$, we define the neural network $f^\beta_\theta(x) = \beta w^\top\sigma(B^\top x)$, where $\sigma$ is applied elementwise.

  Before we outline the statistical assumptions we make for this setting, we first illustrate a property of the function class.
  Consider any matrix $B_0 \in \reals^{d \times 2k}$ that can be expressed as $[A, A]$ for some $A \in \reals^{d \times k}$.
  Furthermore, fix $w_0 \in \set{-1, 1}^{2k}$ such that the last $k$ coordinates are a negation of the first $k$ coordinates.
  We will refer to parameters satisfying the previous conditions as being \emph{antisymmetric}.
  Note that for any antisymmetric parameter $\theta$, $f^\beta_{\theta} \equiv 0$.
  We can then define the following feature vectors:

  \begin{definition}[Feature vectors $\phi, \psi, \rho$]
    \label{def:feature-vectors}
    Let $\theta_0 = (B_0, w_0)$ be an antisymmetric parameter.
    Then, for every $x$, there exists feature vectors $\phi_{B_0}(x)$ and $\psi_{B_0, w_0}(x)$ such that
    \[
      f_{\theta_0 + (\Delta, w)}^{\beta}(x) = \beta w^\top\phi_{B_0}(x) + \beta\Delta^\top\psi_{B_0, w_0}(x) + \beta\zeta_{B_0, w_0}^{\Delta, w}(x).
    \]
    We interpret the features $\phi_{B_0}(x)$ and $\psi_{B_0, w_0}(x)$ to be the gradients of $(B, w) \mapsto f^\beta_{(B, w)}(x)$ evaluated at $\theta_0$\footnote{Closed-form expressions for these quantities are provided in \secref{sec:case-study-proofs}.}.
    Additionally, $\zeta_{B_0, w_0}^{\Delta, w}(x)$ is the Taylor error.
    Finally, we define $\rho_{B_0, w_0}(x)$ to be the concatenation of $\phi_{B_0}(x)$ and $\psi_{B_0, w_0}(x)$.
  \end{definition}

  We will show that if $\norm[F]{\Delta}$ and $\norm[F]{w}$ are both $O(1/\beta)$, then the remainder term $\zeta_{B_0, w_0}^{\Delta, w}(x)$ is $O(1/\beta)$, and thus the function class is approximately linear in $(\Delta, w)$ with feature functions $\phi_{B_0}(\cdot)$ and $\psi_{B_0, w_0}(\cdot)$.
  Note that these features correspond to the ``activation'' and ``gradient'' features, respectively, that are empirically evaluated by \citet{mu2020gradients}.

  \subsubsection{Statistical Assumptions}

    We now outline the statistical assumptions for this setting.
    For all tasks, the inputs are assumed to be sampled from a $1$-norm-bounded distribution $p$.
    Furthermore, we let $q(\cdot \ | \ \mu)$ be generated as $\mu + \eta$ for some $O(1)$-bounded additive noise $\eta$, similar to \citet{tripuraneni2020theory}.

    To define the source tasks, we fix a representation matrix $B^\ast$ and a linear predictor $w_0^\ast$ so that $\theta_0^\ast = (B^\ast, w_0^\ast)$ is antisymmetric, as motivated by the previous discussion.
    \begin{assumption}[Initialization Assumptions]
      \label{assump:nn-init-assump}
      The columns of $B^\ast$ are norm-bounded by $1$.
      Furthermore, $c_1I \preceq \expt{\rho_{\theta_0^\ast}(x)\rho_{\theta_0^\ast}(x)^\top} \preceq c_2I$ for $c_1 > 0$.
    \end{assumption}
    The assumption on the representation covariance ensures that that representation is well-conditioned; we define the condition number $\kappa = c_2/c_1$.
    We then define task-specific fine-tuning options by fixing unit-norm vectors $w_1^\ast, \dots, w_t^\ast \in \reals^{2k}$ and $\delta_1^\ast, \dots, \delta_t^\ast \in \reals^{k}$, as well as an orthonormal set of matrices $\Delta_1^\ast, \dots^\ast, \Delta_k^\ast \in \reals^{d \times 2k}$.
    Then, the optimal parameter for task $t$ is given by
    \[
      \theta_t^\ast = \left(B^\ast + \frac{1}{\beta}\sum_{i \in [k]}\delta_{t, i}^\ast\Delta_i^\ast, w_0^\ast + \frac{1}{\beta}w_t^\ast\right).
    \]
    We note that the fine-tuning step for $t \in [T]$ can be parametrized by $\omega_t^\ast = [w_t^\ast, \delta_t^\ast]/\beta \in \reals^{3k}$ via a linear transformation. We assume that this parametrization has a matrix representation $\Gamma$, so that $\theta_t^\ast = \theta_0^\ast + \Gamma\omega_t^\ast$.

    \begin{assumption}
      \label{assump:nn-diversity-condition}
      Let $\Omega \in \reals^{2k \times T}$ be the matrix $[\omega_1^\ast, \dots, \omega_T^\ast]$.
      Then, $\sigma_{2k}(\Omega) \gtrsim T/k$.
    \end{assumption}

    The assumption above is analogous to the diversity conditions assumed in the previous sections.

    Finally, to define the target task distribution, let $u, v \in \reals^k$ be i.i.d. samples from a uniform distribution over the unit sphere, and define $\theta^\ast = \theta_0^\ast + \Gamma[u, v]/\beta$.

  \subsubsection{Training Procedure}

    Finally, we describe the training procedure.
    Using squared-error loss, we train on the objective $\delta \mapsto \meanloss(f_{\theta_0 + \delta}^\gamma)$, where we set $\gamma = \beta$ during source training, while $\gamma$ is set as a function of $n_\target$ during target training time.
    Furthermore, we let $\Theta_0$ be the set of antisymmetric initializations satisfying \assumpref{assump:nn-init-assump}.
    Finally, we set the constraint sets
    \[
      \constraintset_\source = \set{(\Delta, w) \suchthat \norm[F]{\Delta}, \norm[2]{w} \leq 1/\beta} \quad \text{and} \quad \constraintset_\target^\gamma = \set{(\Delta, w) \suchthat \norm[F]{\Delta}^2 + \norm[2]{w}^2 \leq \kappa/\gamma^2}.
    \]

  \subsubsection{Performance Guarantee}

    Having described the statistical assumptions and the training procedure, we proceed with the performance guarantee.

    \includestatement[dir=results/nonlinear/nn]{nnPerformanceBound}

  \section{A \frzrep{} Hard Case for the Nonlinear Setting}
    \label{sec:gen-hard-case}
    In what follows, we establish the existence of a nonlinear setting where there exists a sample complexity separation between \adprep{} and \frzrep{}, as in \secref{sec:erm-hard-case}.
As before, fix $k, d \in \naturals$ with $2k < d$.
The construction relies on the observation that linear predictors lying in a rank-$k$ space are representable as linear functions of $2k$ appropriately chosen ReLU neurons.

Following the discussion in \secref{sec:two-layer-nets}, note that when we take $\beta \to \infty$, the resulting function class can be expressed as
\[
  f_{(B + \Delta, w)} = w^\top \sigma(B^\top x) + \inprod{x\sigma'(x^\top B), \Delta},
\]
where $B, \Delta \in \reals^{d \times 2k}$, and $w \in \reals^{2k}$.
We further constrain $B$ so that the first $k$ columns are equal to the negation of the last $k$ columns.
Finally, we choose $\sigma(x) = \max(x, 0)$, which we will also write as $x_+$ for convenience\footnote{Note that $\sigma'(x) = \ind{x > 0}$.}.
For convenience, we follow the convention in \secref{sec:two-layer-nets} of writing $\phi_{B}, \psi_{B}$ and $\rho_{B}$ for the activation, gradient, and concatenated features corresponding to $B$, respectively, as defined in \defref{def:feature-vectors}.

We briefly review the construction in \secref{sec:erm-hard-case}.
Let the input distribution $p$ be a Gaussian distribution on $\reals^d$ with covariance
\[
  \Sigma =
  \begin{bmatrix}
    \epsilon I_{d - k} & 0 \\
    0 & I_k \\
  \end{bmatrix}
\]
for a fixed $\epsilon \in (0, 1)$.
Define $E^\ast, E_k \subset \reals^d$ to be the eigenspaces corresponding to the first and second block of $\Sigma$, respectivcely.
For an orthogonal matrix $A \in \reals^{d \times k}$, define a distribution over $\theta$ given by
\begin{equation}
  \theta = \frac{1}{\sqrt{2\epsilon}}Av + \delta,
\end{equation}
where $v$ and $\delta$ are sampled uniformly at random from the unit spheres in $\reals^k$ and $E_k$, respectively.
As before, the family of interest is the set of distributions induced by any $A^\ast$ with $\col{A^\ast} \subseteq E^\ast$.

Now, we lift the linear task distribution setting into the ReLU setting.
In particular, we sample the source tasks by fixing orthogonal $A \in \reals^{d \times k}$, sampling $v$ and $\delta$ as before, and letting the optimal predictor be $f_{(B + \Delta, w)}$, where
\begin{equation}
  \label{eqn:relu-task-dist}
  B \defas [A, -A], \ w \defas \frac{1}{\sqrt{2\epsilon}}[v, -v], \ \Delta \defas \frac{1}{k}\mathbbm{1}\delta^\top.
\end{equation}
One can easily verify algebraically that
\[
  f_{(B + \Delta, w)}(x) = x^\top\left(\frac{1}{\sqrt{2\epsilon}}Av + \delta\right),
\]
as desired.
As before, we consider the family of task distributions induced by any $A^\ast$ with $\col{A^\ast} \subseteq E^\ast$.

With the above task distribution, we can then prove the following hardness result on \frzrep{}:

\includestatement[dir=results/nonlinear/hardcase]{ermReLUMinimaxBound}

In contrast, we have the following upper bound on the performance of \adprep{}:

\includestatement[dir=results/nonlinear/hardcase]{reLUAdaptTargPerf}

Proofs of the above results are provided in \secref{sec:relu-hard-case-proofs}.
As before, we compare the two methods when $n_\target = \Theta(d)$.
Then, from the results above, the lower bound on the loss of \frzrep{} is $\Omega(1)$, while the upper bound on the loss of \adprep{} is $O(1/\sqrt{n_\target})$.
Therefore, we also see a strict separation between the two methods within this setting as well, which grows with $n_\target \to \infty$.

  \section{Simulations}
    \label{sec:simulations}
    \begin{figure}[t]
  \centering
  \begin{minipage}{0.7\linewidth}
    \centering\captionsetup{width=0.9\linewidth,justification=centering}%
    \includegraphics[width=\linewidth]{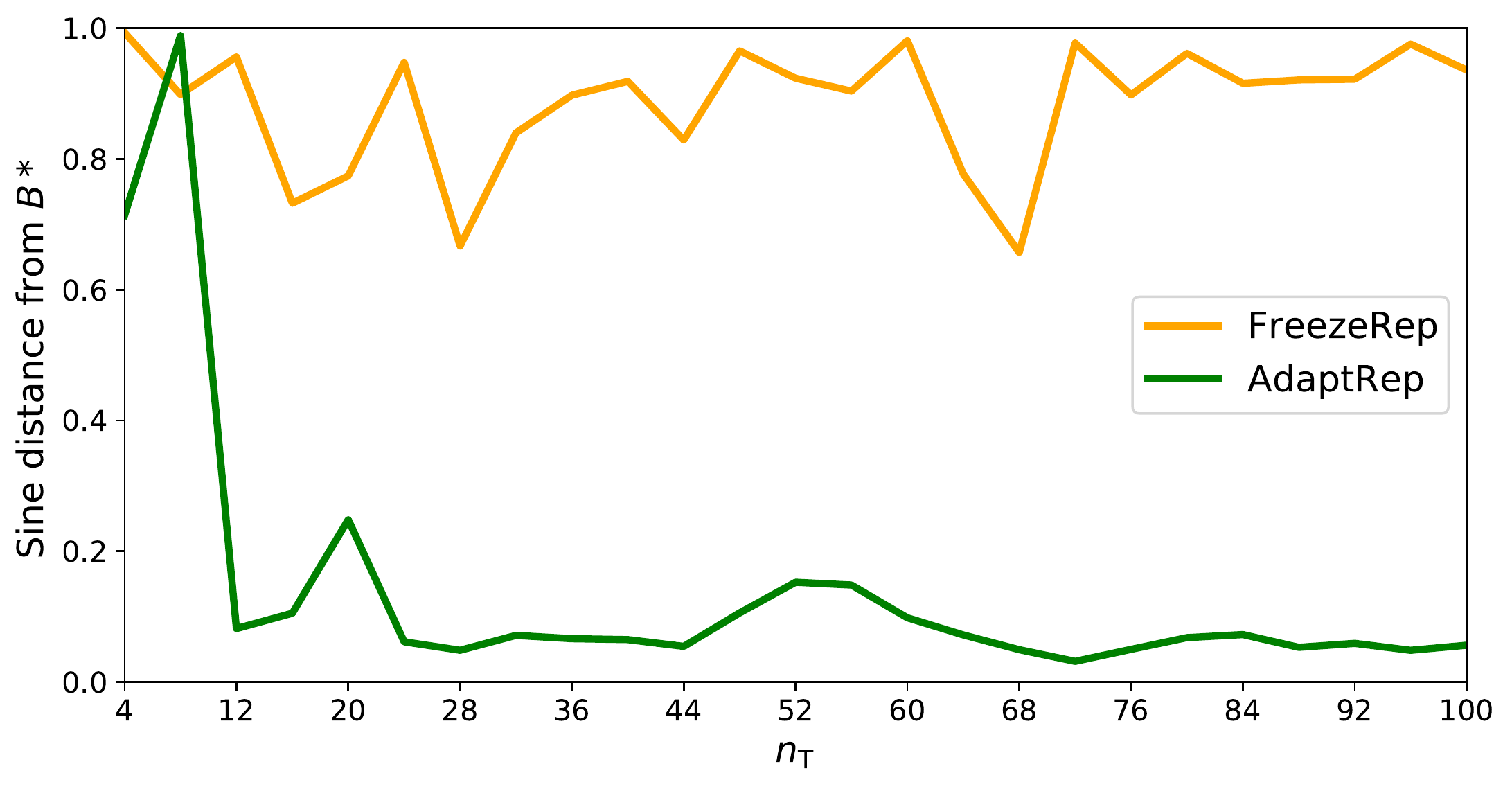}%
    \caption{Sine distances of the representations learned by each method from the correct space $B^\ast$.}
    \label{fig:sin-dists}
  \end{minipage}
  \hfill
  \begin{minipage}{0.7\linewidth}
    \centering\captionsetup{width=0.9\linewidth,justification=centering}%
    \includegraphics[width=\linewidth]{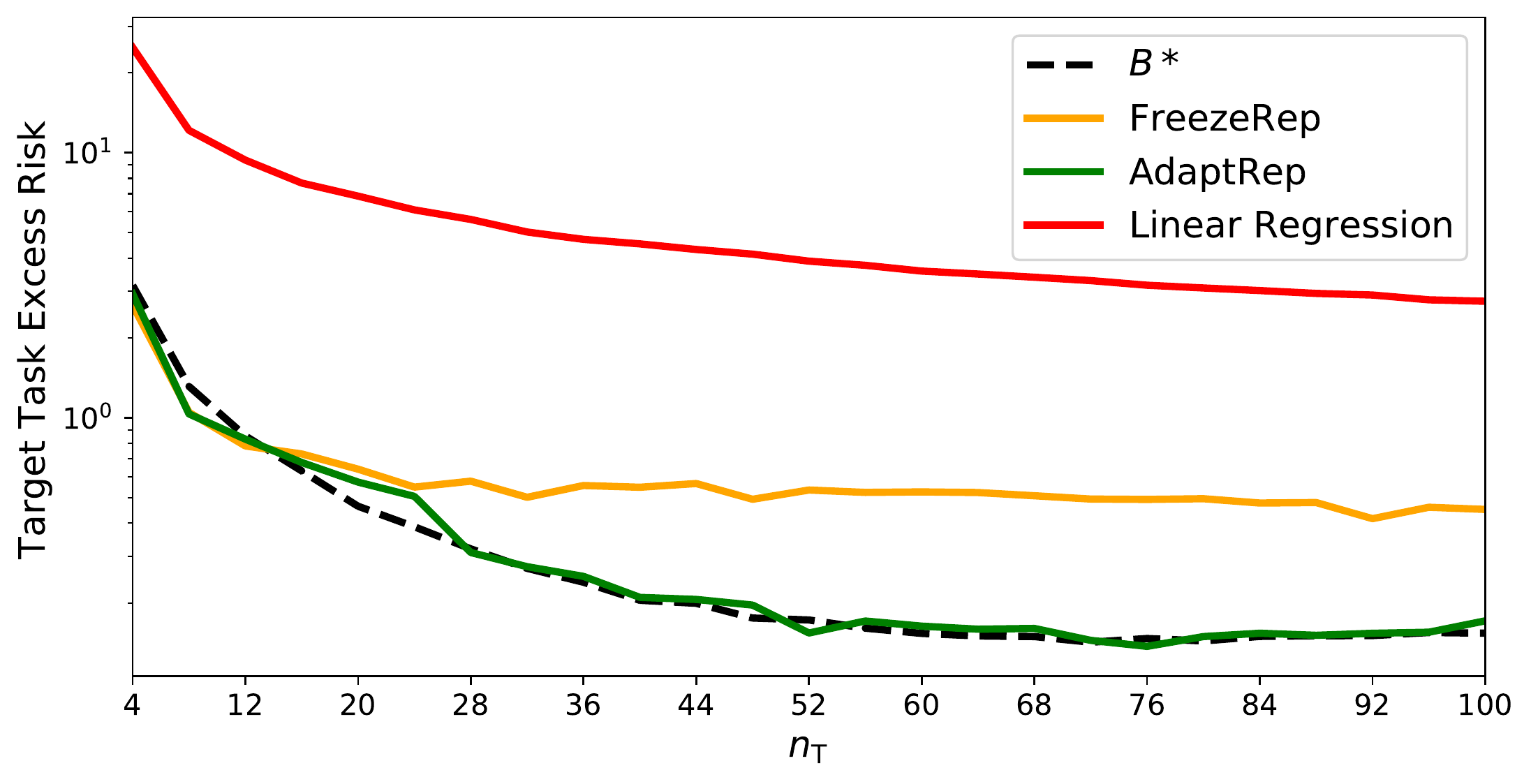}%
    \caption{The average worst-case excess risk for several settings of $n_\target$, plotted on a logarithmic $y$-axis.}
    \label{fig:excess-risks}
  \end{minipage}
\end{figure}

In this section, we experimentally verify the hard case for the linear setting presented in \secref{sec:erm-hard-case}. Since the empirical success of MAML or its variants in general has already been demonstrated extensively in practice and in existing work, it is not the focus of this section.

Recall that in the hard case, we vary the amount of data $n_\target$ available to the learner during target time, and we set $d \asymp n_\target$.
We fix $k = 2$ throughout the experiment.
The matrix $B^\ast$ spans the first $k$ coordinates, while the residuals $\delta_t^\ast$ lie in the span of the last $k$ coordinates.
Finally, we set the $\sigma = 2$ for the Gaussian noise.

During source training, both \frzrep{} and \adprep{} are provided with $1000$ tasks and $10d$ samples per task from the task distribution in \secref{sec:erm-hard-case}.
During target time, we evaluate the learned representation on the worst-case regression task from the same family.

Before we detail our results, we briefly comment on the nonconvexity in the source training procedure.
Rather than optimizing \eqnref{eqn:linear-source-procedure} or \eqnref{eqn:erm-lin-src-procedure} during source training, we use an additional Frobenius-norm regularizer on $B^\top B - WW^\top$ to ensure that the two terms are balanced.
In the case of \frzrep{}, this regularized objective was shown to have a favorable optimization landscape in \citet{tripuraneni2020provable}.
We then used L-BFGS to optimize these regularized objectives.
To further mitigate any possible optimization issues, we evaluated both methods with $10$ random restarts, and report the best of the $10$ restarts (as measured by the worst-case performance on the target task) for both methods.

\textbf{(Subspace Alignment).} First, we plot the alignment of the learned representation (using the best of the $10$ restarts described above) with the correct space $B^\ast$.
We measure this via the sine of the largest principal angle between the two spaces, i.e.
\[
  \sin \Theta_1(\hat{B}, B^\ast) = \sqrt{1 - \lambda_1^2\left(\hat{B}^\top B^\ast\right)}.
\]
We plot the results in \figref{fig:sin-dists}.
As predicted by \lemmaref{lemma:erm-learns-wrong-space}, \frzrep{} does not learn $B^\ast$, in contrast to \adprep{}.

\textbf{(Target Task Performance).} We proceeded to evaluate how the methods fare on their corresponding worst-case target tasks.
We do so by training with the representation over $1000$ i.i.d. draws of the target dataset, and averaging the excess risk over all obtained representations.
We provide the results in \figref{fig:excess-risks}, and include a comparison with standard linear regression as a baseline.
As predicted, \adprep{} performs much better than \frzrep{}, with a gap that grows with $n_\target$.

  \section{Conclusion}
    We have presented, to the best of our knowledge, the first statistical analysis of fine-tuning-based meta-learning.
We demonstrate the success of such algorithms under the assumption of approximately shared representation between available tasks.
In contrast, we show that methods analyzed by prior work that do not incorporate task-specific fine-tuning fail under this weaker assumption.

An interesting line of future work is to determine ways to formulate useful shared structure among MDPs, \emph{i.e.} formulate settings for which meta-reinforcement learning succeeds and results in improved regret bounds for downstream tasks.

  \section*{Acknowledgements}
    KC is supported by a National Science Foundation Graduate Research Fellowship, Grant DGE-2039656.
QL is supported by NSF \#2030859 and the Computing Research Association for the CIFellows Project.
JDL acknowledges support of the ARO under MURI Award W911NF-11-1-0303, the Sloan Research Fellowship, and NSF CCF 2002272.

  \bibliography{references.bib}
  \bibliographystyle{icml2021}

  \clearpage
  \appendix
  \onecolumn

  \section{Proof of \thmref{thm:lin-case-main-result}}
    \label{sec:lin-case-proof}
    In this section, we will prove the performance guarantee in the linear representation setting presented in \thmref{thm:lin-case-main-result}.
We first compute a bound on the difference in the spans of the true underlying representation $B^\ast$ and the representation $B_0$ obtained from training on the source tasks.
Having done so, we then analyze the performance of the best predictor found by projected gradient descent.

For clarity of presentation, we will write $\theta_t^\ast = B_t^\ast w_t^\ast + \delta_t^\ast$ and $\hat\theta_t = (B + \Delta_t)w_t$ throughout this section.
Furthermore, let $\hat\delta_t = \Delta_tw_t$.
Finally, we will be making use of the following covariance concentration results throughout this section, allowing us to connect empirical averages to population averages and vice versa:

\includestatement[dir=results/linear/perf,showproof=true]{linSrcCovConcentration}

\includestatement[dir=results/linear/perf,showproof=true]{linTargCovConcentration}

\subsection{Source Guarantees for the Linear Setting}

  We proceed to analyze the representation $B_0$ obtained from the source training procedure outlined in \eqnref{eqn:linear-source-procedure}.
  Key to the analysis is a bound on the average population loss over the source tasks that the global minimizer of \eqnref{eqn:linear-source-procedure} can achieve as a function of $n_\source$:

  \includestatement[dir=results/linear/perf,showproof=true]{linSrcTrainBound}

  The prior bound is central to the analysis, as the performance of the learner can be tied to how well $B_0$ spans the correct space.
  To see why this is the case, note that for large $n_\source$, the effect of the noise on the optimization in \eqnref{eqn:linear-source-procedure} is negligible.
  In this regime, no matter which representation $B_0$ the learner has chosen, the optimal predictor would satisfy $P_{X_tB_0}X_t\hat\theta_t \approx P_{X_tB_0}X_t\theta_t^\ast$ and $P_{X_tB_0}^\perp X_t\hat\theta_t \approx P_{X_tB_0}^\perp X_t\theta_t^\ast$.
  Consequently, the performance of the predictors chosen by the learner can be tied to the chosen representation $B_0$.
  We formalize this intuition in the following result:

  \includestatement[dir=results/linear/perf,showproof=true]{linTransferLemma}

\subsection{Target Guarantees for the Linear Setting}

  Having established a connection between the performance on the source tasks and the difference in the spans of $B^\ast$ and $B_0$, we can now analyze the performance of the target training procedure.
  First, we bound the performance of nearly optimal points in $\constraintset_\beta$ for several possible choices of $c_1, c_2$.

  \includestatement[dir=results/linear/perf,showproof=true]{linTargStatRates}

\subsection{Optimization Landscape during Target Time Training}

  Having derived statistical rates on nearly-optimal points for several choices of $\constraintset_\beta$ in the prior section, all that remains to be shown is that projected gradient descent can indeed find such points.
  In particular, we will demonstrate that for large enough $\beta$, the optimization landscape induced by $\meanloss_\beta$ is approximately convex.
  We do so by demonstrating that the objective satisfies the assumptions outlined in \secref{sec:pgd-performance-bound}, and thus the accompanying guarantees for projected gradient descent hold.

  \includestatement[dir=results/linear/perf,showproof=true]{linApproxLinearity}

  \includestatement[dir=results/linear/perf,showproof=true]{linLipLoss}

  \includestatement[dir=results/linear/perf,showproof=true]{linOptConstants}

\subsection{Deducing \thmref{thm:lin-case-main-result}}

  Having proven all the previous results, we can now assemble the main claim in \thmref{thm:lin-case-main-result}.
  Recall that we have defined the rates
  \begin{align*}
    r_\source(n_\source, T) &\defas \frac{\sigma^2}{n_\source T}\left(kT + kd\log\kappa n_\source + \log\frac{1}{\delta}\right) + \frac{\sigma\delta_0\norm[2]{\Sigma}^{1/2}}{\sqrt{n_\source T}}\sqrt{kT + kd\log\kappa n_\source + \log\frac{1}{\delta}} \\
      &\qquad + \frac{\sigma\delta_0}{\sqrt{n_\source}}\sqrt{\tr{\Sigma}\left(1 + \log\frac{T}{\delta}\right)} \\
    r_{\target}^{(1)}(n_\target) &\defas kr_{\source}(n_\source, T) + \frac{\sigma^2}{n_\target}\left(k + \log\frac{1}{\delta}\right) + \frac{\sigma\delta_0}{\sqrt{n_\target}}\sqrt{\tr\Sigma\left(1 + \log\frac{1}{\delta}\right)} \\
    r_{\target}^{(2)}(n_\target) &\defas \frac{\sigma^2}{n_\target}\left(k + \log\frac{1}{\delta}\right) \\
      &\qquad + \frac{\sigma}{\sqrt{n_\target}}\left[\delta_0 + \left(\frac{\norm[2]{w^\ast} + \delta_0\kappa^{1/2}}{\lambda_{\min}^{1/2}(\Sigma)}\right)\sqrt{kr_{\source}(n_\source, T)}\right]\sqrt{\tr\Sigma\left(1 + \log\frac{1}{\delta}\right)} \\
    r_{\target}^{(3)}(n_\target) &\defas \frac{\sigma^2}{n_\target}\left(d + \log\frac{1}{\delta}\right).
  \end{align*}

  \includestatement[dir=results/linear/perf,showproof=true]{linMainResult}

  \clearpage

  \section{Proofs for \secref{sec:erm-hard-case}}
    \label{sec:erm-hard-case-proofs}
    \includestatement[dir=results/linear/hardcase,showproof=true]{ermLearnsWrongSpace}

\includestatement[dir=results/linear/hardcase,showproof=true]{ermMinimaxBound}

  \clearpage

  \section{Proof of \thmref{thm:general-performance-bound}}
    \label{sec:gen-perf-proofs}
    In this section, we will prove the guarantee provided in \thmref{thm:general-performance-bound}, along with all related intermediate results.
Along these lines, we proceed as we did for \secref{sec:lin-case-proof}.
More explicitly, the overall outline of the proof follows the four major steps below.
We have placed in parenthesis the corresponding intermediate steps in the linear representation case (\secref{sec:lin-case-proof}) as an additional illustration of the procedure:

\begin{enumerate}[label=(\arabic*)]
  \item Provide a statistical rate for source training (\lemmaref{lemma:lin-source-train-bound}).

  \item Bound the difference in performance between the solution found by the optimization algorithm and the ERM solution (\lemmaref{lemma:lin-approx-linearity}+\thmref{thm:pgd-performance-bound}).

  \item Bound the difference between the ERM solution and the best solution in $\adaptset_{\theta_0}^{\constraintset_\target}$ as a function of target sample size (\lemmaref{lemma:lin-targ-stat-rates}).

  \item Connect the best-case performance in $\adaptset_{\theta_0}^{\constraintset_\target}$ to the performance of the learner on the source tasks (\Cref{lemma:lin-transfer-lemma,lemma:lin-targ-stat-rates,lemma:lin-opt-consts}).

\end{enumerate}

Note that step (4) is provided by the $(\nu, \epsilon)$-diversity condition.
We will now proceed to demonstrate the remaining steps.

\subsection{Bounding Average Source Task Performance (1)}

  The $(\nu, \epsilon)$-diversity condition implies that we can bound the best-case performance during target time training via control over the average source task performance.
  We proceed to provide such a bound using a standard uniform convergence argument.
  Recall that for a set of vector-valued functions $\mathcal{H}$ mapping from $\reals^{m} \to \reals^{n}$, the Rademacher complexity of $\mathcal{H}$ on $n_\source$ samples, denoted $\mathcal{R}_{n_\source}(\mathcal{H})$, is given by
  \[
    \mathcal{R}_{n_\source}(\mathcal{H}) \defas \expt{\sup_{h \in \mathcal{H}}\frac{1}{n_\source}\sum_{i \in [n_\source]}\sum_{j \in [n]}\epsilon_{ij}h(x_i)_j},
  \]
  where the expectation is over the samples $(x_i)$ and i.i.d. Rademacher random variables $\epsilon_{ij}$.

  \includestatement[showproof=true,dir=results/nonlinear/perf]{srcTrainBound}

\subsection{Bounding Optimization Performance (2)}

  Having provided a bound on source task performance, we now proceed to analyze the objective being optimized during target time training.
  We first note that the approximate linearity assumption in \assumpref{assump:adapt-approx-linearity} and the norm-boundedness of $\constraintset_\target$ via \assumpref{assump:target-adapt-norm-bound} ensure that the results from \secref{sec:pgd-performance-bound} apply, as long as we can show that the empirical loss satisfies an $(\alpha/\sqrt{n_\target})$-Lipschitz condition.
  We proceed to show that this is indeed a simple consequence of the $1$-Lipschitz assumption on the loss given by \assumpref{assump:loss-fn-assumptions}.

  \begin{lemma}
    Define the function $\meanloss: \reals^{n_\target} \to \reals$ as
    \[
      \meanloss(\hat{y}) \defas \frac{1}{n_\target}\sum_{i \in [n_\target]}\ell(\hat{y}_i, y_i).
    \]
    for fixed $y_1, \dots, y_{n_\target} \in \mathcal{Y}$.
    Then, for any $\hat{y}$, $\norm[2]{\grad[\hat{y}]\meanloss(\hat{y})}^2 \leq 1/n_\target$.
  \end{lemma}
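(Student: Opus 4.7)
The plan is to prove this by a direct coordinate-wise computation, exploiting the $1$-Lipschitz assumption on $\ell(\cdot, y)$ from \assumpref{assump:loss-fn-assumptions}. Since $\meanloss(\hat{y})$ is a sum of terms each depending on exactly one coordinate of $\hat{y}$, the gradient decouples cleanly.

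Concretely, I would first observe that for each $i \in [n_\target]$, the $i^{\text{th}}$ partial derivative is
\[
  \frac{\partial \meanloss}{\partial \hat{y}_i}(\hat y) = \frac{1}{n_\target}\,\partial_1\ell(\hat{y}_i, y_i),
\]
where $\partial_1$ denotes differentiation in the first argument. Since $\ell(\cdot, y_i)$ is $1$-Lipschitz by \assumpref{assump:loss-fn-assumptions}, its derivative (where it exists; otherwise any subgradient element) is bounded in absolute value by $1$, so $\abs{\partial_1\ell(\hat y_i, y_i)} \leq 1$ for every $i$.

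Squaring and summing then gives
\[
  \norm[2]{\grad[\hat y]\meanloss(\hat y)}^2 = \sum_{i=1}^{n_\target}\left(\frac{1}{n_\target}\right)^2\abs{\partial_1\ell(\hat y_i, y_i)}^2 \leq \frac{n_\target}{n_\target^2} = \frac{1}{n_\target},
\]
yielding the claim. The only minor wrinkle is that $\ell(\cdot, y)$ need not be differentiable everywhere, but since it is assumed convex and $1$-Lipschitz, every element of its subdifferential has norm at most $1$, so the same bound holds for any valid (sub)gradient of $\meanloss$. There is no substantive obstacle here — this lemma exists only to plug into the $(\alpha/\sqrt{n_\target})$-Lipschitz hypothesis required by the PGD performance bound of \secref{sec:pgd-performance-bound}, with $\alpha = 1$.
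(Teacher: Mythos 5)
Your proof is correct and follows essentially the same route as the paper's: compute the gradient coordinate-wise, bound each partial derivative by $1$ via the $1$-Lipschitz assumption on $\ell(\cdot, y)$, and sum the squares to get $1/n_\target$. The extra remark about subgradients in the non-differentiable case is a harmless refinement the paper does not bother with.
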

  \begin{proof}
    By direct computation,
    \[
      \norm[2]{\grad[\hat y]\meanloss(\hat y)}^2 = \frac{1}{n_\target^2}\sum_{i \in n_\target}\abs{\grad[\hat y_i]\ell(\hat y_i, y)}^2 \leq \frac{1}{n_\target},
    \]
    where we have used the fact that $\ell(\cdot, y)$ is $1$-Lipschitz for any $y \in \mathcal{Y}$ by \assumpref{assump:loss-fn-assumptions}.
  \end{proof}

  Having verified that the assumptions in \secref{sec:pgd-performance-bound} hold, it follows that we have the following performance bound on projected gradient descent during target time training:

  \begin{lemma}
    \label{lemma:target-opt-perf}
    Assume that we run projected gradient descent during target training time for $T_\pgd$ iterations with step size $\eta$ given by
    \[
      \eta = \frac{1}{\sqrt{T_\pgd}}\left(\frac{R}{\sqrt{L^2 + \beta^2R^2}}\right).
    \]
    Let $g_0, \dots, g_{T_\pgd}$ denote the sequence of predictors obtained, where $g_0 = g_{\theta_0}$.
    Then, for any $g \in \adaptset_{\theta_0}^{\constraintset_\target}$,
    \[
      \min_{t}\meanloss^{g^\ast}(g_t) - \meanloss^{g^\ast}(g) \leq \beta R^2 + R\sqrt{\frac{L^2 + \beta^2 R^2}{T_\pgd}}.
    \]
  \end{lemma}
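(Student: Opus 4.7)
The plan is to reduce the claim to an invocation of the standalone projected-gradient result in \thmref{thm:pgd-performance-bound}, specialized to the target objective $f(\delta) := \meanloss^{g^\ast}(g_{\theta_0 + \delta})$ on the feasible set $\constraintset_\target$. The substantive work is to verify its hypotheses, namely a Lipschitz gradient estimate and an approximate-convexity estimate, with the constants that match the stated bound, and then to optimize $\eta$.

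First I would bound the gradient of $f$. By the chain rule,
\[
  \nabla_\delta f(\delta) = \frac{1}{n_\target}\sum_{i = 1}^{n_\target}\ell'(g_{\theta_0+\delta}(x_i),y_i)\,\nabla_\theta g_{\theta_0+\delta}(x_i),
\]
so by the $1$-Lipschitz property of $\ell(\cdot,y)$ in \assumpref{assump:loss-fn-assumptions} together with Cauchy--Schwarz, $\|\nabla f(\delta)\|^2 \leq \tfrac{1}{n_\target}\sum_i \|\nabla_\theta g_{\theta_0+\delta}(x_i)\|^2$. Using the fundamental theorem of calculus, $\nabla_\theta g_{\theta_0+\delta}(x_i) = \nabla_\theta g_{\theta_0}(x_i) + \int_0^1 \nabla_\theta^2 g_{\theta_0+t\delta}(x_i)\,\delta\,dt$, and combining the two halves of \assumpref{assump:adapt-approx-linearity} with $\|\delta\| \leq R$ from \assumpref{assump:target-adapt-norm-bound} (applying Cauchy--Schwarz to the sample average of Hessian norms) yields the uniform estimate $\|\nabla f(\delta)\| \lesssim \sqrt{L^2 + \beta^2 R^2}$ on $\constraintset_\target$.

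Next I would quantify the nonconvexity. For each sample, a second-order Taylor expansion gives $g_{\theta_0+\delta}(x_i) = g_{\theta_0}(x_i) + \langle \nabla_\theta g_{\theta_0}(x_i), \delta\rangle + \zeta(x_i,\delta)$ with $|\zeta(x_i,\delta)| \leq \tfrac{1}{2}\|\nabla_\theta^2 g_{\theta_0 + t_i\delta}(x_i)\|\,\|\delta\|^2$ for some $t_i \in [0,1]$. Averaging over $i$ and invoking the Hessian bound in \assumpref{assump:adapt-approx-linearity} (again via Cauchy--Schwarz) produces $\tfrac{1}{n_\target}\sum_i |\zeta(x_i,\delta)| \leq \tfrac{\beta R^2}{2}$. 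Let $\tilde f$ denote the surrogate obtained by replacing $g_{\theta_0+\delta}(x_i)$ with its first-order expansion; since $\ell(\cdot,y)$ is convex by \assumpref{assump:loss-fn-assumptions} and the expansion is affine in $\delta$, $\tilde f$ is convex, while the $1$-Lipschitz property of $\ell$ propagates the Taylor error to $|f(\delta) - \tilde f(\delta)| \leq \beta R^2 / 2$ uniformly on $\constraintset_\target$.

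With gradient bound $G := \sqrt{L^2 + \beta^2 R^2}$, diameter $R$, and convex-surrogate gap $\beta R^2/2$ in hand, the hypotheses of \thmref{thm:pgd-performance-bound} are met, and invoking it with $\eta = R/(G\sqrt{T_\pgd})$ balances the two standard terms in the convex PGD rate and reproduces the stated bound $\beta R^2 + R\sqrt{(L^2 + \beta^2 R^2)/T_\pgd}$. The only conceptual obstacle in a from-scratch proof would be reconciling the fact that PGD iterates are driven by $\nabla f$ rather than $\nabla \tilde f$ (so standard convex-PGD potential arguments do not apply verbatim), but this discrepancy is exactly what \thmref{thm:pgd-performance-bound} is designed to absorb into its $\beta R^2$ term; our role here is just the verification described above.
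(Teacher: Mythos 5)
Your proposal is correct and follows essentially the same route as the paper: the lemma is obtained by specializing the standalone PGD guarantee of \thmref{thm:pgd-performance-bound}, after observing that the $1$-Lipschitzness and convexity of $\ell$ from \assumpref{assump:loss-fn-assumptions}, the approximate-linearity constants $(\beta, L)$ from \assumpref{assump:adapt-approx-linearity}, and the radius $R$ from \assumpref{assump:target-adapt-norm-bound} instantiate exactly the hypotheses of \secref{sec:pgd-performance-bound}. The only difference is that your explicit gradient bound and convex-surrogate argument re-derive steps that in the paper are internal lemmas of \secref{sec:pgd-performance-bound} rather than part of this lemma's proof, which is harmless.
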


\subsection{Bounding the Performance of ERM during Target Training (3)}

  We now proceed to bound the performance of the ERM solution during target time training.
  Via following the canonical risk decomposition as in \lemmaref{lemma:nonlin-src-train-bound}, we can prove such a bound simply by bounding the maximum deviation between empirical and population losses over $\adaptset_{\theta_0}^{\constraintset_\target}$.

  \begin{lemma}
    \label{lemma:nonlin-targ-train-bound}
    Let $\mathcal{S}$ be the support of $\rho$.
    With probability at least $1 - \delta$ over the random draw of inputs and noise,
    \[
      \sup_{\substack{g^\ast \in \mathcal{S} \\ g \in \adaptset_{\theta_0}^{\constraintset_\target}}}\abs{\meanloss^{g^\ast}(g) - \meanloss^{g^\ast}_\infty(g)} \leq \frac{1}{\delta}\mathcal{R}_{n_\target}\left(\adaptset_{\theta_0}^{\constraintset_\target}\right) + \frac{B}{\delta\sqrt{n_\target}}.
    \]
  \end{lemma}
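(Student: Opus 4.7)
The plan is a standard Markov--symmetrization--contraction argument for uniform convergence.

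First, let $Z \defas \sup_{g^\ast \in \mathcal{S},\, g \in \adaptset_{\theta_0}^{\constraintset_\target}} \abs{\meanloss^{g^\ast}(g) - \meanloss^{g^\ast}_\infty(g)}$. Since $Z \geq 0$, Markov's inequality yields $\Pr[Z \geq \expt{Z}/\delta] \leq \delta$, which is the source of the $1/\delta$ factor in the stated bound. Using Markov here (rather than a sharper McDiarmid-type concentration) sidesteps the need to establish a bounded-differences condition for a supremum jointly indexed by $g^\ast$ and $g$.

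Next, I would bound $\expt{Z}$ by standard symmetrization. Introducing an independent ghost sample and a Rademacher sequence $(\epsilon_i)$, one obtains
\[
\expt{Z} \leq 2\, \expt{\sup_{g^\ast, g} \abs{\frac{1}{n_\target}\sum_{i=1}^{n_\target}\epsilon_i\, \ell(g(x_i), y_i)}}.
\]
I would then decompose $\ell(g(x_i), y_i) = [\ell(g(x_i), y_i) - \ell(0, y_i)] + \ell(0, y_i)$ and apply the triangle inequality. For the $g$-dependent piece, the map $u \mapsto \ell(u, y) - \ell(0, y)$ is $1$-Lipschitz with value $0$ at the origin \emph{uniformly in $y$}, so the Ledoux--Talagrand contraction inequality bounds the corresponding Rademacher average by $\mathcal{R}_{n_\target}(\adaptset_{\theta_0}^{\constraintset_\target})$; crucially, this uniformity in $y$ is precisely what allows the $g^\ast$-dependence (which enters only through the labels) to drop out after contraction. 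For the $g$-independent piece, $\abs{\ell(0, y_i)} \leq B$ by \assumpref{assump:loss-fn-assumptions}, so a direct Khintchine-style bound on a bounded Rademacher average yields $O(B/\sqrt{n_\target})$.

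Summing the two contributions gives $\expt{Z} \lesssim \mathcal{R}_{n_\target}(\adaptset_{\theta_0}^{\constraintset_\target}) + B/\sqrt{n_\target}$, and dividing by $\delta$ via the Markov step completes the proof. The main technical obstacle is precisely the joint supremum over $g^\ast \in \mathcal{S}$ and $g \in \adaptset_{\theta_0}^{\constraintset_\target}$, given that the empirical samples depend on $g^\ast$ through the conditional distribution $q(\cdot \mid g^\ast(x))$. The resolution lies entirely in the contraction step: once the label-only constant $\ell(0, y)$ has been peeled off, the remaining $1$-Lipschitz, zero-at-origin piece admits a contraction bound that is uniform in the labels, so the supremum over $g^\ast$ adds no extra complexity to the Rademacher term, and the residual label dependence is absorbed entirely into the $B/\sqrt{n_\target}$ summand.
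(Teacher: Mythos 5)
Your proposal is correct and follows essentially the same route as the paper: the paper proves this by invoking the same Markov--symmetrization--contraction argument used for the source-task uniform convergence bound (which is exactly where the $1/\delta$ and $B/(\delta\sqrt{n_\target})$ terms come from), and its only added remark is the same observation you make — that $g^\ast$ enters only through the labels, which affect the bound solely via $\abs{\ell(0, y_i)} \leq B$, so the supremum over $g^\ast \in \mathcal{S}$ costs nothing extra.
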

  \begin{proof}
    The proof proceeds similarly to that of \lemmaref{lemma:nonlin-src-train-bound}.
    Note that the supremum over $g^\ast$ does not affect the bound, since $g^\ast$ only enters into the expression through the labels $y_{i}$, and no matter what choice of $g^\ast$ is made, $\abs{\ell(0, y_i)} \leq B$ for all $i$.
  \end{proof}

\subsection{Concluding: Proving \thmref{thm:general-performance-bound}}

  Having completed all the steps for the outline, we now proceed to compile the main result.
  Intuitively, since the diversity condition allows us to bound the performance of the best predictor, and projected gradient descent can perform as well as any predictor in $\adaptset_{\theta_0}^{\constraintset_\target}$ (and thus, as well as the best predictor), we can obtain performance bounds on the iterates found while training on the target task.

  \includestatement[dir=results/nonlinear/perf,showproof=true]{genPerformanceBound}

  \clearpage

  \section{Proofs for Case Studies}
    \label{sec:case-study-proofs}
    \subsection{Logistic Regression}

  In this section, we verify that the logistic regression setting, as described in \secref{sec:logistic-regression}, satisfies the assumptions required by the general framework in \secref{sec:gen-setting-assumptions}.
  Subsequently, we then compute the necessary quantities for instantiating the guarantees provided by \thmref{thm:general-performance-bound} in this setting.

  \subsubsection{Verifying Assumptions of \secref{sec:gen-setting-assumptions}}

  It is easily verified that the logistic loss is $1$-Lipschitz, convex, and that $\abs{\ell(0, y)} \leq 1$ for $y \in \set{0, 1} = \mathcal{Y}$.
  Furthermore, as we have already characterized the approximate linearity of the function class in \lemmaref{lemma:lin-approx-linearity}, the approximate linearity assumption in \assumpref{assump:adapt-approx-linearity} holds with high probability.
  Finally, $\constraintset_\target$ is norm-bounded by $(\kappa r^2 + \delta_0^2)/\beta^2$.

  Therefore, all that remains is verifying that a $(\nu, \epsilon)$-diversity condition holds in this setting.
  In what follows, we will do so by connecting the logistic loss to squared error loss via leveraging smoothness and local strong convexity, as was done by \citet{tripuraneni2020theory}.
  Consequently, we can utilize the same argument as in the linear setting to obtain the desired diversity condition.

  \includestatement[showproof=true,dir=results/nonlinear/logistic]{logisticDiversityCondition}

  \subsubsection{Computations}

  Having demonstrated that the assumptions in \secref{sec:gen-setting-assumptions} holds, we proceed to calculate the relevant quantities required for establishing a performance bound in this setting.
  First, we compute the Rademacher complexity for source task training.

  \includestatement[showproof=true,dir=results/nonlinear/logistic]{logisticSourceRademacher}

  \includestatement[showproof=true,dir=results/nonlinear/logistic]{logisticTargetRademacher}

  \subsubsection{Compiling the Bound}

  \includestatement[showproof=true,dir=results/nonlinear/logistic]{logisticPerformanceBound}

\subsection{Two-Layer Neural Networks}

  Recall the feature vectors $\phi_{B_0}(x)$ and $\psi_{B_0, w_0}(x)$ defined in \defref{def:feature-vectors}, which have the following closed-form expressions:
  \[
  \phi_{B_0}(x) = \sigma(B_0^\top x) \quad \text{and} \quad \psi_{B_0, w_0}(x) = w_0^{\mathrm{diag}}\sigma'(B_0^\top x)x^\top.
  \]
  Additionally, there exists $\bar{B} = B_0 + \alpha_1 \Delta$ and $\bar{w} = w_0 + \alpha_2 w$ for some $\alpha_1, \alpha_2 \in [0, 1]$ such that the Taylor remainder $\zeta_{(B_0, w_0)}^{\Delta, w}(x)$ can be written as
  \[
    \zeta_{(B_0, w_0)}^{\Delta, w}(x) = \sum_{i=1}^{2k}\bar{w}_{i}\sigma''(\bar{B}_{i}^\top x)(\Delta_i^\top x)^2 + 2w_i\sigma'(\bar{B}_{i}^\top x)(\Delta_i^\top x).
  \]
  We define the feature vector $\rho_{B, w}(x)$ to be the concatentation of $\phi_B(x)$ and $\psi_{B, w}(x)$.
  For $\theta_0 = (B_0, w_0)$ and $\delta = (\Delta, w)$, we will frequently abuse notation and let $\rho_{\theta_0}(x)\delta$ denote the linearization of $f_{\theta_0 + \delta}$, \emph{i.e.}
  \[
    \rho_{\theta_0}(x)\delta = w^\top\phi_{B_0}(x) + \inprod{\psi_{\theta_0}(x), \Delta}.
  \]

  Before we verify the assumptions in \secref{sec:gen-setting-assumptions} required to instantiate the bounds, we first provide the following intermediate results on relevant norm bounds:

  \includestatement[dir=results/nonlinear/nn,showproof=true]{nnHessianBound}

  As an immediate corollary, since $\zeta_{\theta_0}^{\delta}(x)$ evaluates the Hessian at a point satisfying the preconditions of \lemmaref{lemma:nn-hessian-bound} during both source and target trainng time, we have the following result:

  \begin{corollary}
    \label{corollary:nn-remainder-bound}
    For any $\theta_0 \in \Theta_0$ and any $(\Delta, w)$ with $\norm[F]{\Delta} \leq c_1/\beta, \norm[2]{w} \leq c_2/\beta$, we have that $\abs{\beta\zeta_{\theta_0}^{(\Delta, w)}(x)} \leq c_1c_2(\mu + L)/\beta$ for $x \sim p$ almost surely.
  \end{corollary}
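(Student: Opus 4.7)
The corollary follows by direct application of \lemmaref{lemma:nn-hessian-bound}, so the plan is to make the ``immediate'' identification precise. I would begin by recognizing the explicit expression for $\zeta_{\theta_0}^{(\Delta, w)}(x)$ -- the two sums in the display just above the corollary -- as the mean-value form of the second-order Taylor remainder of the map $(B, w) \mapsto w^\top \sigma(B^\top x)$ expanded around $\theta_0 = (B_0, w_0)$ in the direction $(\Delta, w)$. Concretely, the first sum is the $\partial^2/\partial B^2$ block of the Hessian at $(\bar B, \bar w)$ contracted against $(\Delta, \Delta)$, while the second sum is twice the mixed $\partial^2/\partial B\,\partial w$ block at $(\bar B, \bar w)$ contracted against $(\Delta, w)$; the pure $\partial^2/\partial w^2$ block vanishes because the network is linear in $w$.

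Next, I would verify that the intermediate point $\tilde\theta = (\bar B, \bar w) = (B_0 + \alpha_1 \Delta, w_0 + \alpha_2 w)$ lies inside the region on which \lemmaref{lemma:nn-hessian-bound} controls the Hessian. Since $\alpha_1, \alpha_2 \in [0, 1]$, $\norm[F]{\Delta} \leq c_1/\beta$, and $\norm[2]{w} \leq c_2/\beta$, the columns of $\bar B$ have norm at most $1 + c_1/\beta$ and $\bar w$ has sup-norm at most $1 + c_2/\beta$. Combined with the $1$-boundedness of the input distribution from \secref{sec:two-layer-nets} and the column-norm bound on $B_0$ from \assumpref{assump:nn-init-assump}, this yields $|\bar B_i^\top x| \leq 2$ and $|\bar w_i| \leq 2$ for $x \sim p$ almost surely, which are precisely the conditions activating the bounds $|\sigma'| \leq L$ and $|\sigma''| \leq \mu$ of \assumpref{assump:nn-activation-assump} that \lemmaref{lemma:nn-hessian-bound} relies on. These hold for both the source constraint set $\constraintset_\source$ (where $c_1 = c_2 = 1$) and the target constraint set $\constraintset_\target^\gamma$ (where the norms remain $O(1)$ for reasonable $\beta$).

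Applying \lemmaref{lemma:nn-hessian-bound} at the intermediate point then gives a Hessian bilinear bound of the form $|\zeta_{\theta_0}^{(\Delta, w)}(x)| \lesssim (\mu + L)\,\norm[F]{\Delta}\,\norm[2]{w}$, where the product structure comes from the first-order character of each of the two Hessian blocks above (the $BB$ block carries a factor of $\bar w_i$ with sup-norm bounded by a constant, and the mixed $Bw$ block is already linear in both perturbations). Substituting $\norm[F]{\Delta} \leq c_1/\beta$ and $\norm[2]{w} \leq c_2/\beta$ yields $|\zeta_{\theta_0}^{(\Delta, w)}(x)| \leq c_1 c_2 (\mu + L)/\beta^2$, and multiplying by $\beta$ produces the claim. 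The only step with any content is the preconditions check -- confirming that the intermediate point stays within the admissible region on both the source and target constraint sets -- after which the corollary is genuinely a one-line consequence of the Hessian bound.
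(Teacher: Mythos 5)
Your argument is exactly the paper's: the corollary is presented as an immediate application of \lemmaref{lemma:nn-hessian-bound}, and the only substantive content is the preconditions check you carry out---namely that the mean-value point $(\bar{B}, \bar{w}) = (B_0 + \alpha_1\Delta, w_0 + \alpha_2 w)$ stays in the admissible region (column norms of $\bar{B}$ and entries of $\bar{w}$ bounded by $2$ under the $1$-bounded inputs and \assumpref{assump:nn-init-assump}, activating \assumpref{assump:nn-activation-assump}) for both $\constraintset_\source$ and $\constraintset_\target^\gamma$, after which one substitutes $\norm[F]{\Delta} \leq c_1/\beta$, $\norm[2]{w} \leq c_2/\beta$ and multiplies by $\beta$. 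One minor caveat: the $BB$ block of the remainder is quadratic in $\Delta$ with weight $\bar{w}_i$, so it is naturally bounded by $\mu\,\norm[\infty]{\bar{w}}\,\norm[F]{\Delta}^2 = O(\mu c_1^2/\beta^2)$ rather than by a genuine $\norm[F]{\Delta}\norm[2]{w}$ product; this coincides with the stated $c_1 c_2$ form because $c_1 = c_2$ (up to constants) in both constraint sets actually used, but your ``product structure'' gloss is not the literal reason for that block.
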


  The above result formalizes the intuition outlined in the main text, which claims that the function class is approximately linear in the fine-tuning parameters $(\Delta, w)$ for large enough $\beta$.
  Finally, we provide a norm bound on the combined representation vector $\rho_{\theta_0}(x)$.

  \includestatement[dir=results/nonlinear/nn,showproof=true]{nnRepresentationBound}

  \subsection{Verifying Assumptions of \secref{sec:gen-setting-assumptions}}

  Having proved the norm bounds above, we now can proceed to verify the required assumptions.
  First, we verify the assumptions on the loss function; clearly, the squared error loss is convex, so we simply need to verify that the loss is Lipschitz over the prediction, and that $\ell(0, y)$ is bounded.
  Note that we have to prove separate bounds for source and target training, due to the change in the function class.

  \includestatement[dir=results/nonlinear/nn,showproof=true]{nnLossAssumpValidation}

  \includestatement[dir=results/nonlinear/nn,showproof=true]{nnDiversityCondition}

  \includestatement[dir=results/nonlinear/nn,showproof=true]{nnApproximateLinearity}

  \subsubsection{Computations}

  \includestatement[dir=results/nonlinear/nn,showproof=true]{nnSourceRademacher}

  \includestatement[dir=results/nonlinear/nn,showproof=true]{nnTargetRademacher}

  \subsubsection{Compiling the Bound}

  \includestatement[dir=results/nonlinear/nn,showproof=true]{nnPerformanceBound}

  \clearpage

  \section{Proofs for the Nonlinear Hard Case}
    \label{sec:relu-hard-case-proofs}
    Throughout this section, we write $\theta_t^\ast = A^\ast v_t^\ast + \delta_t^\ast$ for the linear predictor parameter for source task $t \in [T]$.
Furthermore, we assume that this linear predictor corresponds to ReLU predictor $f_{(B^\ast + \Delta^\ast, w^\ast)}$.
First, we prove the following intermediate technical result which will be used throughout this section.

\includestatement[dir=results/nonlinear/hardcase,showproof=true]{optReLUPredictorIsLinear}

\subsection{Hardness Result for \frzrep{}}

\includestatement[dir=results/nonlinear/hardcase,showproof=true]{ermLearnsWrongNeurons}

\includestatement[dir=results/nonlinear/hardcase,showproof=true]{ermReLUMinimaxBound}

\subsection{Adaptation Upper Bound}

Having proven the minimax result for \frzrep{}, we now proceed to prove a corresponding upper bound on the performance of \adprep{}.
To do so, we need to prove a result analogous to \lemmaref{lemma:lin-transfer-lemma} for the ReLU setting.

Before we proceed to the proof, we need to find proper generalizations for relevant objects in the proof of \lemmaref{lemma:lin-transfer-lemma}.
In particular, we recall the prominent use of the projector $P_{\Sigma B_0}$, which, loosely speaking, can be thought of as representing the ``average'' component of the signal that can be represented by a parameter in $\col{B_0}$.

Recall that the inputs (which are element of $\reals^d$) are sampled from a distribution $p$.
This input distribution induces the $L_2(p)$-norm\footnote{we write $L_2$ throughout as a shorthand for $L_2(p)$.} on vector-valued functions of $\reals^d$ and its associated inner product via
\[
  \norm[L_2]{\zeta} = \expt{\norm[2]{\zeta(x)}^2} \quad \text{and} \quad \inprod{\zeta, \xi}_{L_2} = \expt{\zeta(x)^\top \xi(x)}.
\]
Now, let $\zeta: \reals^d \to \reals^p$ and $\xi: \reals^d \to \reals^q$ be two representation functions on $\reals^d$.
Then, we can define the \emph{linear projector onto $\zeta$} as the linear operator $P_\zeta$ taking representations $\reals^d \to \reals^q$ onto itself via
\[
  [P_\zeta \xi](x) \defas \expt{\xi(x)\zeta(x)^\top}\expt{\zeta(x)\zeta(x)^\top}^\dagger \zeta(x).
\]
We denote the corresponding orthogonal projection as $P_\zeta^\perp \xi \defas \xi - P_\zeta \xi$.
We note that these operators satisfy the orthogonality property
\[
  \expt{[P_\zeta \xi](x)^\top [P_\zeta^\perp \xi](x)} = 0,
\]
which can be easily verified algebraically.

To understand the operator $P_\zeta$, consider the problem of approximating a linear function of $\xi$ via a linear function of $\zeta$, as measured via the input distribution $p$.
More formally, for $v \in \reals^q$, we want to find
\[
  w^\ast = \argmin_{w \in \reals^p}\norm[L_2]{\xi(\cdot)^\top v - \zeta(\cdot)^\top w}^2 \quad \text{and} \quad f^\ast = \zeta(\cdot)^\top w^\ast.
\]
As the problem is differentiable and convex in $w$, we can simply use standard optimality conditions to find that
\[
  w^\ast = \expt{\zeta(x)\zeta(x)^\top}^\dagger\expt{\zeta(x)\xi(x)^\top}v \quad \text{and} \quad f^\ast = \zeta(\cdot)^\top w^\ast.
\]
That is, $P_\zeta\xi$ is performing exactly the transformation required on $\zeta$ such that $[P_\zeta\xi(\cdot)]^\top v$ is the best approximation to $\xi(\cdot)^\top v$ via linear functions of $\zeta$ in $L_2$-norm.
To further connect this construction to the linear setting, observe that if $\zeta(x) = B_0^\top x$ and $\xi(x) = x$, then for any $\theta, v \in \reals^d$,
\[
  \inprod{\xi(\cdot)^\top v, [P_\zeta^\perp\xi](\cdot)^\top\theta}_{L_2} = v^\top\Sigma^{1/2}P_{\Sigma B_0}^\perp\Sigma^{1/2}\theta,
\]
and thus $P_{\zeta}\xi$ is indeed the desired generalization of the projection operators used in the proof of \lemmaref{lemma:lin-transfer-lemma}.
Having introduced the required mathematical tools, we now prove the corresponding transfer lemma for this setting.

\includestatement[dir=results/nonlinear/hardcase,showproof=true]{reLUTransferLemma}

\includestatement[dir=results/nonlinear/hardcase,showproof=true]{reLUAdaptTargPerf}

  \clearpage

  \section{A Performance Bound for Projected Gradient Descent}
    \label{sec:pgd-performance-bound}
    In this section, we provide a performance bound for projected gradient descent on the objective
\[
  \meanloss(\theta) \defas \frac{1}{n}\sum_{i \in [n]}\ell(g_\theta(x_i), y_i)
\]
for $\theta \in \Theta$, where $\Theta \subseteq \reals^q$.
We assume that $\Theta$ is norm-bounded by $D$, and that $\Theta$ is convex and contains $0$.
Furthermore, we assume that $g_\theta$ is twice-differentiable as a function of $\theta$.

Key to the performance bound that we will demonstrate is that $g_\theta$ is ``approximately linear'' in the parameter $\theta$, which we formally define below.
Under this assumption, we demonstrate $\meanloss$ is approximately convex over $\Theta$ if $\meanloss$ is Lipschitz as a function of the vector of predictions and $\ell$ is convex in the first argument.
Therefore, with slight modifications to the online analysis of projected gradient descent, we obtain the desired performance bound.

Following the discussion above, we make the following assumptions:

\begin{assumption}[Approximate linearity]
  \label{assump:pgd-approximate-linearity}
  There exists $\beta$ and $L$ such that
  \[
    \sup_{\theta \in \Theta}\frac{1}{n}\sum_{i \in [n]}\norm[2]{\grad[\theta]^2g_\theta(x_i)}^2 \leq \beta^2 \quad \text{and} \quad \frac{1}{n}\sum_{i \in [n]}\norm[2]{\grad[\theta]g_{0}(x)}^2 \leq L^2.
  \]
\end{assumption}

\begin{assumption}[Assumptions on $\ell$]
  \label{assump:pgd-loss-assumptions}
  We assume that $\ell$ is convex in the first argument.
  Furthermore, if $\meanloss$ is viewed as a function of the vector of predictions $g_\theta(X)$, then we have that
  \[
    \norm[2]{\grad[g]\meanloss(g_\theta(X))}^2 \leq \frac{\alpha^2}{n}
  \]
  for any $\theta \in \Theta$, i.e. $\meanloss$ is $(\alpha/\sqrt{n})$-Lipschitz as a function of the vector of predictions.
\end{assumption}

Note that we abuse notation in \assumpref{assump:pgd-loss-assumptions}, using $\meanloss$ to reference both the function of the parameter, and of the vector of predictions.
Given these two assumptions, we now proceed to demonstrate that the loss landscape of $\meanloss$ has several desirable properties.

\includestatement[showproof=true,dir=results/optimization]{pgdApproximateConvexity}

\includestatement[showproof=true,dir=results/optimization]{pgdGradientBound}

Having proven the results above, we now proceed to the main claim of this section.

\includestatement[showproof=true,dir=results/optimization]{pgdPerformanceBound}

  \clearpage

  \section{Technical Lemmas}
    \begin{proposition}[\citet{du2020few}, Lemma A.5]
  \label{prop:orth-mat-covering}
  Let $\mathcal{O}^{d_1 \times d_2}$ be the set of matrices in $\reals^{d_1 \times d_2}$ with orthonormal columns, $d_1 \geq d_2$.
  Then, there exists an $\epsilon$-covering of $\mathcal{O}^{d_1 \times d_2}$ with at most $(6\sqrt{d_2}/\epsilon)^{d_1d_2}$ elements.
\end{proposition}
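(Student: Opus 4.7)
The plan is to combine a standard volumetric covering of a Frobenius-norm ball with the fact that $\mathcal{O}^{d_1 \times d_2}$ sits naturally inside such a ball, and then project the covering onto the Stiefel manifold.

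First I would observe that any $M \in \mathcal{O}^{d_1 \times d_2}$ satisfies $\norm[F]{M}^2 = \tr(M^\top M) = \tr(I_{d_2}) = d_2$, so $\mathcal{O}^{d_1 \times d_2}$ is a subset of the Frobenius-norm ball $B_F(0, \sqrt{d_2}) \subseteq \reals^{d_1 \times d_2}$. Regarded as a Euclidean ball in $\reals^{d_1 d_2}$, this ball has radius $\sqrt{d_2}$, and the standard volume-ratio argument gives that a maximal $(\epsilon/2)$-packing has cardinality at most $((2\sqrt{d_2} + \epsilon/2)/(\epsilon/2))^{d_1 d_2} \leq (6\sqrt{d_2}/\epsilon)^{d_1 d_2}$ for $\epsilon \leq \sqrt{d_2}$, and any maximal $(\epsilon/2)$-packing is also an $(\epsilon/2)$-covering $\mathcal{N}$ of $B_F(0, \sqrt{d_2})$.

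Second, I would promote $\mathcal{N}$ to a covering of $\mathcal{O}^{d_1 \times d_2}$ via a nearest-point projection. For each $p \in \mathcal{N}$, let $\pi(p)$ denote a point in $\mathcal{O}^{d_1 \times d_2}$ minimizing Frobenius distance to $p$; such a point always exists, and can be realized concretely as $UV^\top$ from the SVD $p = U\Sigma V^\top$ via the polar decomposition. Setting $\mathcal{N}' \defas \set{\pi(p) : p \in \mathcal{N}}$, for any $M \in \mathcal{O}^{d_1 \times d_2}$ I pick $p \in \mathcal{N}$ with $\norm[F]{M - p} \leq \epsilon/2$. By the definition of $\pi$, we have $\norm[F]{p - \pi(p)} \leq \norm[F]{p - M} \leq \epsilon/2$, so by the triangle inequality
\[
\norm[F]{M - \pi(p)} \leq \norm[F]{M - p} + \norm[F]{p - \pi(p)} \leq \epsilon.
\]
Thus $\mathcal{N}'$ is an $\epsilon$-covering of $\mathcal{O}^{d_1 \times d_2}$ with $\abs{\mathcal{N}'} \leq \abs{\mathcal{N}} \leq (6\sqrt{d_2}/\epsilon)^{d_1 d_2}$, which is the desired bound.

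There is not really a significant obstacle here, as every step is standard. The only subtlety worth flagging is that the argument implicitly doubles the net radius during the projection step, which is precisely where the factor of $6$ (rather than $3$) in the base of the exponential comes from; starting instead from an $\epsilon$-net of the ambient ball and projecting would only yield a $(2\epsilon)$-net of $\mathcal{O}^{d_1 \times d_2}$.
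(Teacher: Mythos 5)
Your proof is correct and is essentially the standard argument (and the one in the cited source, \citet{du2020few}, Lemma A.5): embed $\mathcal{O}^{d_1\times d_2}$ in the Frobenius ball of radius $\sqrt{d_2}$, take an $(\epsilon/2)$-net of that ball of cardinality at most $(1+4\sqrt{d_2}/\epsilon)^{d_1d_2}\le(6\sqrt{d_2}/\epsilon)^{d_1d_2}$, and push it onto the Stiefel manifold by nearest-point projection, losing a factor of two in the radius. The paper itself states this proposition by citation without reproving it, so there is nothing further to compare.
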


\begin{proposition}[Solving quadratic inequalities]
  \label{prop:solve-quad-ineqs}
  Assume that $ax^2 \leq bx + c$ for $a, b, c > 0$.
  Then, $bx + c \lesssim (b^2/a) + c$.
\end{proposition}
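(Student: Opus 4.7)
The statement is an elementary quadratic inequality, so the plan is simply to identify a clean bookkeeping argument. The goal is to convert the implicit quadratic bound $ax^2 \leq bx + c$ into a closed-form upper bound on $bx + c$ of order $b^2/a + c$. Two natural routes present themselves: invert the quadratic directly via the quadratic formula to extract an upper bound on $x$, or apply an AM-GM style inequality to the cross term $bx$ and then absorb the $ax^2$ term back into the hypothesis.

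I would pursue the second route, which is the cleaner of the two. The key observation is the AM-GM bound
\[
  bx \;=\; \sqrt{a}\,x \cdot \frac{b}{\sqrt{a}} \;\leq\; \frac{ax^2}{2} + \frac{b^2}{2a}.
\]
Combining this with the hypothesis $ax^2 \leq bx + c$ gives $bx \leq (bx + c)/2 + b^2/(2a)$, which upon rearrangement yields $bx \leq c + b^2/a$. Adding $c$ to both sides delivers $bx + c \leq 2c + b^2/a$, which is the desired $\lesssim b^2/a + c$ conclusion (with explicit constant $2$).

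As a sanity check one could alternatively invoke the quadratic formula: $ax^2 - bx - c \leq 0$ forces $x \leq (b + \sqrt{b^2 + 4ac})/(2a) \leq b/a + \sqrt{c/a}$, and then $bx \leq b^2/a + b\sqrt{c/a} \leq (3/2)(b^2/a) + c/2$ by another AM-GM step, giving the same conclusion up to constants. There is no genuine obstacle here: the inequality is tight up to constants (consider $c = 0$, in which case $ax^2 \leq bx$ gives $x \leq b/a$ and $bx \leq b^2/a$), so only the elementary manipulation above is required.
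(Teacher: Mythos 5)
Your proposal is correct, and your primary argument takes a genuinely different (and slightly slicker) route than the paper. The paper solves the quadratic explicitly: since $a>0$, the hypothesis forces $x \leq r_2 = \frac{b+\sqrt{b^2+4ac}}{2a} \leq \frac{b}{a} + \sqrt{\frac{c}{a}}$, and then it bounds $bx + c \leq \frac{b^2}{a} + \frac{b}{\sqrt{a}}\sqrt{c} + c \lesssim \frac{b^2}{a} + c$, absorbing the cross term by an AM--GM/Cauchy--Schwarz step. Your main argument skips the root-solving entirely: the single AM--GM bound $bx \leq \frac{a x^2}{2} + \frac{b^2}{2a}$, fed back into the hypothesis $ax^2 \leq bx + c$, yields $bx \leq c + \frac{b^2}{a}$ and hence $bx + c \leq \frac{b^2}{a} + 2c$ with an explicit constant. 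This is more self-contained (no case on whether $x$ is in the root interval, and it is indifferent to the sign of $x$, since for $x \leq 0$ the AM--GM step is trivially valid), whereas the paper's route has the minor side benefit of producing an explicit upper bound on $x$ itself, not just on $bx+c$. Your quadratic-formula sanity check is essentially the paper's proof, and your tightness remark (the $c=0$ case) is a nice touch; nothing is missing.
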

\begin{proof}
  Since $a > 0$, the solution set to the inequality is given by the interval $[r_1, r_2]$, where $r_1$ and $r_2$ are the roots of $ax^2 - bx - c$.
  By the quadratic formula, the larger root $r_2$ is given by
  \[
    r_2 = \frac{b + \sqrt{b^2 + 4ac}}{2a} \leq \frac{b}{a} + \sqrt{\frac{c}{a}}.
  \]
  Therefore,
  \[
    x \leq r_2 \leq \frac{b}{a} + \sqrt{\frac{c}{a}} \implies bx + c \leq \frac{b^2}{a} + \left(\frac{b}{\sqrt{a}}\right)\sqrt{c} + c \lesssim \frac{b^2}{a} + c,
  \]
  where the last inequality makes use of the Cauchy-Schwarz inequality.
\end{proof}

\begin{corollary}
  \label{cor:quadratic-expectation-lower-bound}
  Let $X$ and $Y$ be random variables.
  \[
    \expt{X^2} \lesssim \expt{(X + Y)^2} + \expt{Y^2}
  \]
\end{corollary}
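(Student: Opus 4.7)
The plan is to use the identity $X = (X+Y) - Y$ together with the elementary pointwise inequality $(a-b)^2 \leq 2a^2 + 2b^2$, which itself follows from expanding and applying AM--GM (or equivalently, convexity of $t \mapsto t^2$ applied to $\tfrac{1}{2}(2a) + \tfrac{1}{2}(-2b)$). Instantiating this with $a = X+Y$ and $b = Y$ yields
\[
  X^2 = \bigl((X+Y) - Y\bigr)^2 \leq 2(X+Y)^2 + 2Y^2
\]
pointwise.

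Next, I would take expectations on both sides, using linearity and monotonicity of expectation, to obtain
\[
  \expt{X^2} \leq 2\expt{(X+Y)^2} + 2\expt{Y^2}.
\]
Since the statement is phrased with the $\lesssim$ notation (which absorbs absolute constants), this is exactly the claim. The entire argument is a one-line pointwise bound followed by a monotone expectation, so there is no genuine obstacle; the only subtlety worth noting is that the inequality is stated with $\lesssim$ rather than with explicit constants, so the factor $2$ is precisely what justifies the notation.
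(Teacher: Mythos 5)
Your proof is correct, but it takes a genuinely different and more elementary route than the paper. You decompose $X = (X+Y) - Y$ and apply the pointwise bound $(a-b)^2 \leq 2a^2 + 2b^2$, then take expectations, obtaining the explicit constant $2$ in one line. The paper instead expands $\expt{(X+Y)^2} = \expt{X^2} + 2\expt{XY} + \expt{Y^2}$, lower-bounds the cross term via Cauchy--Schwarz as $-2\sqrt{\expt{X^2}\expt{Y^2}}$, and then treats the resulting relation as a quadratic inequality in $\sqrt{\expt{X^2}}$, invoking its quadratic-inequality lemma (\propref{prop:solve-quad-ineqs}) to absorb the cross term and conclude. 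Your argument is shorter, avoids both Cauchy--Schwarz and the auxiliary quadratic-inequality machinery, and gives a clean explicit constant; the paper's version is slightly heavier but reuses a proposition it needs elsewhere anyway, so the extra machinery costs it nothing in context. Either proof fully justifies the $\lesssim$ in the statement.
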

\begin{proof}
  We have that
  \[
    \expt{(X + Y)^2} = \expt{X^2} + 2\expt{XY} + \expt{Y^2} \geq \expt{X^2} - 2\expt{\abs{XY}}.
  \]
  Therefore, by applying Cauchy-Schwarz,
  \[
    \expt{X^2} \leq \expt{(X + Y)^2} + 2\expt{\abs{XY}} \leq \expt{(X + Y)^2} + 2\sqrt{\expt{X^2}\expt{Y^2}}.
  \]
  Finally, by applying \propref{prop:solve-quad-ineqs},
  \[
    \expt{X^2} \lesssim \expt{(X + Y)^2} + \expt{Y^2}. \qedhere
  \]
\end{proof}


\begin{proposition}
  \label{prop:mat-to-proj}
  Let $A, B$ be matrices with compatible dimensions, and assume that $\rank{A} = r > 0$.
  Then,
  \[
    \norm[F]{P_A B} \leq \frac{1}{\sigma_r(A)}\norm[F]{A^\top B}.
  \]
\end{proposition}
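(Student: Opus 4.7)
The plan is to reduce the inequality to a one-line diagonal estimate via the compact SVD of $A$. Write $A = U\Sigma V^\top$, where $U$ and $V$ have orthonormal columns (with $r$ columns each), and $\Sigma \in \reals^{r \times r}$ is diagonal with strictly positive entries $\sigma_1(A) \geq \cdots \geq \sigma_r(A) > 0$. Under this decomposition, the projector onto $\col{A}$ is $P_A = UU^\top$, and $A^\top = V\Sigma U^\top$.

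Next, I would introduce the auxiliary matrix $C \defas U^\top B$ and simplify both sides through the norm-invariance of multiplication by matrices with orthonormal columns. On the left, $P_A B = UC$, so $\norm[F]{P_A B} = \norm[F]{C}$. On the right, $A^\top B = V\Sigma C$, so $\norm[F]{A^\top B} = \norm[F]{\Sigma C}$. After these substitutions, the target inequality collapses to
\[
\norm[F]{C} \leq \frac{1}{\sigma_r(A)}\norm[F]{\Sigma C}.
\]

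The last step is a row-by-row expansion of the Frobenius norm. Since $\Sigma$ is diagonal, its action on $C$ scales the $i$-th row of $C$ by $\sigma_i(A)$, giving
\[
\norm[F]{\Sigma C}^2 = \sum_{i=1}^r \sigma_i(A)^2 \norm[2]{C_{i,:}}^2 \geq \sigma_r(A)^2 \sum_{i=1}^r \norm[2]{C_{i,:}}^2 = \sigma_r(A)^2 \norm[F]{C}^2,
\]
and rearranging yields the bound.

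There is no real obstacle here; once the compact SVD is invoked, the rest is bookkeeping. The only minor subtlety is working with the \emph{compact} SVD rather than the full one, so that $\Sigma$ contains only strictly positive singular values and the division by $\sigma_r(A)$ is well-defined. This is exactly what the hypothesis $\rank{A} = r > 0$ ensures.
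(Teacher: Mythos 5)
Your proof is correct and follows essentially the same route as the paper's: compact SVD of $A$, invariance of the Frobenius norm under multiplication by matrices with orthonormal columns, and a diagonal lower bound $\norm[F]{\Sigma C}^2 \geq \sigma_r^2(A)\norm[F]{C}^2$ (the paper expands column-by-column where you expand row-by-row, which is immaterial). No gaps.
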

\begin{proof}
  Let $(U, \Sigma, V^\top)$ be the compact singular value decomposition of $A$, \textit{i.e.} we only retain positive singular values in $\Sigma$.
  By rotational invariance,
  \[
    \norm[F]{A^\top B}^2 = \norm[F]{V\Sigma U^\top B}^2 = \norm[F]{\Sigma U^\top B}^2.
  \]
  Furthermore, by definition,
  \[
    \norm[F]{\Sigma U^\top B}^2 = \sum_{i}\norm[2]{\Sigma U^\top Be_i}^2 \geq \sigma_r^2(\Sigma)\sum_{i}\norm[2]{U^\top Be_i}^2 = \sigma_r^2(A)\norm[F]{U^\top B}^2.
  \]
  Finally, by applying rotational invariance once more,
  \[
    \norm[F]{A^\top B}^2 \geq \sigma_r^2(A)\norm[F]{UU^\top B}^2 = \sigma_r^2(A)\norm[F]{P_{A}B}^2,
  \]
  from which the desired claim follows.
\end{proof}

\begin{proposition}
  \label{prop:sum-regularizers-equivalence}
  Let $\lambda, \gamma > 0$, and fix a vector $y$.
  Then,
  \[
    \min_{\substack{A, x \\ Ax = y}}\frac{\lambda}{2}\norm[F]{A}^2 + \frac{\gamma}{2}\norm[2]{x}^2 = \sqrt{\lambda\gamma}\norm[2]{y}.
  \]
\end{proposition}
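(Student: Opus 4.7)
The statement is an optimal constant-choice identity that I would prove by separately establishing the two inequalities.

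\textbf{Lower bound.} I would first apply the AM-GM inequality in the form $\tfrac{\lambda}{2}\|A\|_F^2 + \tfrac{\gamma}{2}\|x\|_2^2 \geq \sqrt{\lambda\gamma}\,\|A\|_F\,\|x\|_2$. Then I would use the basic operator-norm/Frobenius-norm inequality $\|Ax\|_2 \leq \|A\|_F\,\|x\|_2$ together with the constraint $Ax = y$ to get $\|A\|_F\,\|x\|_2 \geq \|y\|_2$. Combining these two steps yields the lower bound $\sqrt{\lambda\gamma}\,\|y\|_2$ for every feasible pair $(A,x)$.

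\textbf{Upper bound via explicit construction.} To show the lower bound is tight, I would restrict attention to rank-one choices of the form $A = \alpha\, y\, u^\top$ and $x = \beta\, u$, where $u$ is an arbitrary unit vector and $\alpha,\beta$ are scalars with $\alpha\beta = 1$ (so that $Ax = y$). Plugging in gives the one-dimensional problem of minimizing $\tfrac{\lambda}{2}\alpha^2\|y\|_2^2 + \tfrac{\gamma}{2\alpha^2}$ over $\alpha \neq 0$. A routine calculus step (or a second application of AM-GM) shows the minimum equals $\sqrt{\lambda\gamma}\,\|y\|_2$, achieved at $\alpha^2 = \sqrt{\gamma}/(\sqrt{\lambda}\,\|y\|_2)$, assuming $y \neq 0$. (The $y = 0$ case is trivial, with both sides equal to zero using $A = 0$, $x = 0$.)

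\textbf{Conclusion and expected difficulty.} Putting the two bounds together gives the claimed equality. There is no real obstacle here: the only subtlety worth stating carefully is that AM-GM becomes an equality exactly when $\lambda\|A\|_F^2 = \gamma\|x\|_2^2$, and Cauchy--Schwarz is tight precisely when $A$ has the rank-one form above with $x$ in the row span of $A$. The construction chosen in the upper bound is engineered to make both inequalities simultaneously tight, so the two steps meet cleanly.
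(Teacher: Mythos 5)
Your proof is correct, and it takes a genuinely different (though equally elementary) route from the paper's. The paper first fixes $x \neq 0$ and characterizes the exact minimizer over $A$ subject to $Ax = y$ — namely the minimum-Frobenius-norm solution $A = yx^\top/\norm[2]{x}^2$ — and then reduces the remaining optimization over $x$ to a one-dimensional convex problem in $z = \norm[2]{x}^2$ solved by first-order conditions. You instead prove matching bounds: a lower bound for every feasible pair via AM--GM combined with $\norm[2]{y} = \norm[2]{Ax} \leq \norm[F]{A}\norm[2]{x}$, and an upper bound via the explicit rank-one family $A = \alpha y u^\top$, $x = u/\alpha$. The two-sided-bound structure buys you something: you never need to justify that the optimal $A$ for fixed $x$ is rank one (the paper's justification of this step — "if $A$ is not rank-$1$, we can achieve a lower Frobenius norm by reducing its rank" — is the one slightly informal moment in its argument, properly handled by decomposing $A$ into components along and orthogonal to $x$). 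Your lower bound holds for all feasible $(A, x)$ with no structural claim about minimizers, and your upper bound is a direct computation on a feasible point. Both proofs correctly dispose of the $y = 0$ case separately, and your closing remark about when AM--GM and the norm inequality are simultaneously tight correctly explains why the construction achieves the bound.
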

\begin{proof}
  We proceed by cases.
  If $y = 0$, then the result is trivial.

  Otherwise, if $y \neq 0$, note that $x^\ast \neq 0$.
  Now, for any fixed $x \neq 0$, the minimizing choice for $A$ is $zx^\top$ for some $z$.
  To see this, observe that if $A$ is not rank-$1$, then we can achieve a lower Frobenius norm by reducing its rank.
  Consequently, for a given $x$, the minimizing choice for $A$ is $yx^\top/\norm[2]{x}^2$ necessarily.
  Therefore,
  \[
    \min_{\substack{A, x \\ Ax = y}}\frac{\lambda}{2}\norm[F]{A}^2 + \frac{\gamma}{2}\norm[2]{x}^2 = \min_{x}\frac{\lambda}{2}\left(\frac{\norm[2]{y}^2}{\norm[2]{x}^2}\right) + \frac{\gamma}{2}\norm[2]{x}^2 = \min_{z > 0}\frac{\lambda}{2}\left(\frac{\norm[2]{y}^2}{z}\right) + \frac{\gamma z}{2}.
  \]
  This final optimization problem is convex in $z$ -- using first-order optimality conditions, we can thus easily see that $z^\ast = \sqrt{\lambda/\gamma}\norm[2]{y}$, and therefore
  \[
    \min_{\substack{A, x \\ Ax = y}}\frac{\lambda}{2}\norm[F]{A}^2 + \frac{\gamma}{2}\norm[2]{x}^2 = \sqrt{\lambda\gamma}\norm[2]{y}. \qedhere
  \]
\end{proof}

\begin{proposition}[Expectation bound on empirical spectral norm]
  \label{prop:empirical-spec-norm-exp-bound}
  Let $X \in \reals^{n \times d}$ be a matrix with rows drawn i.i.d. from a zero-mean distribution with covariance $\Sigma$.
  Furthermore, assume that the whitened distribution is $\rho^2$-sub-Gaussian.
  Then, whenever $n \gtrsim \rho^4d$,
  \[
    \expt{\lambda_{\max}\left(\frac{X^\top X}{n}\right)} \lesssim \norm[2]{\Sigma}.
  \]
\end{proposition}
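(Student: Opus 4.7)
The plan is to reduce to the isotropic case via whitening and then apply a standard sub-Gaussian covariance concentration bound. Define the whitened matrix $Z \defas X\Sigma^{-1/2} \in \reals^{n \times d}$, whose rows are i.i.d.\ zero-mean, identity-covariance, and $\rho^2$-sub-Gaussian by assumption. Since $X_i = \Sigma^{1/2}Z_i$, we have
\[
  \frac{X^\top X}{n} = \Sigma^{1/2}\left(\frac{Z^\top Z}{n}\right)\Sigma^{1/2},
\]
and hence $\lambda_{\max}(X^\top X/n) \leq \norm[2]{\Sigma}\cdot\lambda_{\max}(Z^\top Z/n)$. It therefore suffices to show that $\expt{\lambda_{\max}(Z^\top Z/n)} = O(1)$ whenever $n \gtrsim \rho^4 d$, after which a single application of the above inequality finishes the job.

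Next, I would invoke the standard empirical covariance concentration bound for isotropic sub-Gaussian vectors (see, \emph{e.g.}, Theorem~4.6.1 of Vershynin's \emph{High-Dimensional Probability}): there is an absolute constant $C > 0$ such that for every $t \geq 0$,
\[
  \Pr\left[\norm[2]{\frac{Z^\top Z}{n} - I_d} \geq C\rho^2\left(\sqrt{\frac{d + t}{n}} + \frac{d + t}{n}\right)\right] \leq 2e^{-t}.
\]
This is precisely where the sub-Gaussian hypothesis enters, and it explains the threshold $n \gtrsim \rho^4 d$: in that regime both $\rho^2\sqrt{d/n}$ and $\rho^2 d/n$ are $O(1)$, so the typical value of $\norm[2]{Z^\top Z/n - I_d}$ is constant.

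Finally, I would convert this tail bound into the desired expectation bound via the layer-cake identity $\expt{Y} = \int_0^\infty \Pr[Y \geq s]\,ds$ applied to $Y = \norm[2]{Z^\top Z/n - I_d}$. A straightforward change of variables $s = C\rho^2(\sqrt{(d+t)/n} + (d+t)/n)$ shows that the tail is sub-exponential in $s$, so the integral is dominated by a constant multiple of $\rho^2\sqrt{d/n} + \rho^2 d/n$, which is $O(1)$ under the hypothesis on $n$. Combined with $\lambda_{\max}(Z^\top Z/n) \leq 1 + \norm[2]{Z^\top Z/n - I_d}$ and the whitening step, this yields $\expt{\lambda_{\max}(X^\top X/n)} \lesssim \norm[2]{\Sigma}$, as required.

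The only genuinely non-trivial step is the sub-Gaussian covariance concentration inequality itself, which is classical and typically proved via an $\epsilon$-net plus Bernstein-type argument on quadratic forms of sub-Gaussian vectors. With that inequality quoted from the literature, the remaining work --- the whitening reduction and the tail integration --- is routine, so I would expect the bulk of the writeup to consist of setting up notation and verifying the constants in the scaling argument.
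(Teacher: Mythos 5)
Your proposal is correct and takes essentially the same route as the paper: both amount to a triangle-inequality step plus Vershynin's sub-Gaussian covariance concentration under the condition $n \gtrsim \rho^4 d$. The only cosmetic differences are that the paper applies Weyl's inequality directly to $X^\top X/n$ and quotes the expectation-form bound on $\|X^\top X/n - \Sigma\|_2$ (Vershynin, Four Lectures, Theorem 4.4.1), whereas you whiten first and re-derive the expectation bound by integrating the isotropic tail bound (HDP, Theorem 4.6.1) --- a slightly longer but equivalent path, with the minor caveat that the whitening step presumes $\Sigma$ is invertible (otherwise use the pseudo-inverse, as is implicit in the paper as well).
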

\begin{proof}
  By Weyl's inequality,
  \[
    \expt{\lambda_{\max}\left(\frac{X^\top X}{n}\right)} \leq \norm[2]{\Sigma} + \expt{\abs{\lambda_{\max}\left(\frac{X^\top X}{n}\right) - \Sigma}} \leq \norm[2]{\Sigma} + \expt{\norm[2]{\frac{X^\top X}{n} - \Sigma}}.
  \]
  Thus, by applying the result in \citet[Theorem 4.4.1]{vershynin2017four}, we have that as long as $n_\source \gtrsim \rho^4 d$,
  \[
    \expt{\lambda_{\max}\left(\frac{X^\top X}{n}\right)} \lesssim \norm[2]{\Sigma}. \qedhere
  \]
\end{proof}

\begin{proposition}[Gaussian complexity chain rule, \citet{tripuraneni2020theory}, Theorem 7]
  \label{prop:gauss-comp-chain-rule}
  Assume that $\mathcal{F}$ is a class of functions $\reals^{k} \to \reals$ such that every $f \in \mathcal{F}$ is $L$-Lipschitz in the $L_2$-norm.
  Furthermore, assume that $\Phi$ is a class of functions $\reals^d \to \reals^k$ such that for any $\phi \in \Phi$, $\phi(x)$ is norm-bounded by $D$ for any $x$ in the support of the input distribution.
  Then, we have the bound
  \[
    \frac{1}{T}\mathcal{G}_{n}(\mathcal{F}^{\otimes T} \comp \Phi) \leq \frac{8D}{(nT)^2} + 128\left(\frac{L}{T}\mathcal{G}_{n}(\Phi) + \expt{\sup_{Z \in \mathcal{Z}}\mathcal{G}_Z(\mathcal{F})}\right)\log(nT),
  \]
  where $\mathcal{Z}$ is the random set $\set{(\phi(x_{i_1}), \dots, \phi(x_{i_n})) \suchthat i_1, \dots, i_n \in [nT]}$ and $\mathcal{G}_Z(\mathcal{F})$ is the empirical Gaussian complexity on samples $Z$.
  Note that the inner expectation is over the $nT$ input samples, and that we have assumed that all input samples come from a single distribution.
\end{proposition}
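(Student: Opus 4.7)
The plan is to establish this two-layer Gaussian chain rule by decoupling the per-task supremum over $\mathcal{F}$ from the shared supremum over $\Phi$, and then using a chaining argument to control the $\phi$-variation. First, I would expand the definition: letting $(g_{t,i})_{t \in [T], i \in [n]}$ be i.i.d.\ standard Gaussians and $(x_{t,i})$ the $nT$ samples,
\[
\frac{1}{T}\mathcal{G}_n(\mathcal{F}^{\otimes T} \comp \Phi) = \frac{1}{nT}\expt{\sup_{\phi \in \Phi,\ f_1,\dots,f_T \in \mathcal{F}} \sum_{t=1}^T \sum_{i=1}^n g_{t,i}\, f_t(\phi(x_{t,i}))}.
\]
The key decoupling observation is that, conditional on fixing $\phi$, the supremum over $(f_1,\dots,f_T)$ separates into $T$ independent suprema over $\mathcal{F}$ on the transformed sample blocks $Z_t(\phi) = (\phi(x_{t,1}),\dots,\phi(x_{t,n}))$. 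Each such block is a length-$n$ tuple drawn from $\mathcal{Z}$, so the inner conditional term is $\sum_t n\cdot \mathcal{G}_{Z_t(\phi)}(\mathcal{F}) \leq T n \sup_{Z \in \mathcal{Z}} \mathcal{G}_Z(\mathcal{F})$. Taking expectations yields the second term in the bound, modulo the $\log(nT)$ factor that will arise from the chaining step below.

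Next, to handle the dependence of the overall supremum on $\phi$, I would set up a Dudley-style chaining construction over $\Phi$ using the empirical $L_\infty$ pseudo-metric $d(\phi,\phi') = \max_{t,i}\|\phi(x_{t,i}) - \phi'(x_{t,i})\|_2$. Using the $L$-Lipschitz property of each $f \in \mathcal{F}$, replacing $\phi$ by an $\epsilon$-net element changes the inner sum by at most $L\epsilon \sum_{t,i}|g_{t,i}|$, whose expectation is $O(L\epsilon \cdot nT)$. Standard chaining converts the net complexity at each scale into an integral bounded by the Gaussian complexity of $\Phi$ itself, producing the $(L/T)\mathcal{G}_n(\Phi)$ contribution. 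The two scales at which the chain terminates — the coarsest scale (capturing the residual variation over $\Phi$ by contraction) and the finest scale (where further discretization contributes only the task-specific $\sup_Z \mathcal{G}_Z(\mathcal{F})$ term) — together produce the two summands multiplied by $\log(nT)$, the latter being the number of dyadic levels needed to traverse the diameter of $\Phi$, which is $O(D)$ under the norm bound.

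Finally, the $8D/(nT)^2$ residual is the stopping error at the finest dyadic scale $\epsilon \asymp D/(nT)^2$, which is small enough that further refinement of the chain contributes negligibly relative to the expected size of the Gaussian process on a single net ball. Combining the $\phi$-level chaining bound with the conditional decoupling bound yields the claimed inequality.

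The main obstacle in the proof is carrying out the chaining cleanly despite the fact that the inner sup over $(f_1,\dots,f_T)$ does not satisfy a uniform Lipschitz property with a favorable constant — one must exploit that the $f_t$'s are decoupled across $t$ even while they share a common $\phi$. This is precisely why the decoupling step must be done \emph{before} the chaining on $\Phi$: otherwise, a naive contraction would pick up an additional factor of $T$. Carefully tracking the interaction between the per-task Gaussian process variance and the shared geometry of $\Phi$ is the delicate part; the rest is bookkeeping of absolute constants and logarithmic factors.
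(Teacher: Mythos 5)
First, note that the paper does not prove this proposition at all: it is imported verbatim as Theorem 7 of \citet{tripuraneni2020theory}, so there is no in-paper proof to compare against. Measured against the actual proof in that reference, your sketch has the right architecture (decouple the per-task suprema conditionally on $\phi$, then discretize/chain over $\Phi$ and convert covering numbers of $\Phi$ back into $\mathcal{G}_n(\Phi)$, collecting a $\log(nT)$ factor), but two steps as written do not go through. The first is the decoupling step: conditionally on $\phi$ the supremum over $(f_1,\dots,f_T)$ does separate into $\sum_t \sup_{f_t}\sum_i g_{t,i}f_t(\phi(x_{t,i}))$, but identifying this with $\sum_t n\,\mathcal{G}_{Z_t(\phi)}(\mathcal{F})$ requires taking the Gaussian expectation \emph{inside} the outer $\sup_\phi$, and the inequality $\mathbb{E}\sup_\phi(\cdot) \geq \sup_\phi\mathbb{E}(\cdot)$ points in the wrong direction for an upper bound. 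What is actually needed is a concentration statement for each fixed net point $\phi_j$ (the decoupled sum concentrates around its conditional mean) followed by a union bound over the net, which is where the $\sqrt{\log|N_\epsilon|}$ enters; your sketch never supplies this.

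The second gap is the assertion that ``standard chaining converts the net complexity at each scale into an integral bounded by the Gaussian complexity of $\Phi$ itself.'' That conversion is the nontrivial content of the chain rule: one must invoke Sudakov minoration to bound $\sqrt{\log N(\epsilon;\Phi)}$ by $\mathcal{G}_n(\Phi)/\epsilon$ at every dyadic scale, and it is the resulting $\int \epsilon^{-1}\,d\epsilon$ from the finest scale $\asymp D/(nT)^2$ up to $D$ that produces the $\log(nT)$ prefactor on \emph{both} terms (not merely the count of dyadic levels, and not only on the $\Phi$ term as a naive accounting would suggest). Your pathwise increment bound $L\epsilon\sum_{t,i}|g_{t,i}|$ with expectation $O(L\epsilon\, nT)$ yields only a single-scale discretization error; it does not provide the scale-by-scale tail control on increments that a genuine chaining argument over $\Phi$ requires, nor does it interact correctly with the union bound over net points discussed above. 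You correctly identify that the inner supremum is not uniformly Lipschitz in $\phi$ with a favorable constant and that decoupling must precede the contraction; but naming the obstacle is not the same as overcoming it, and as it stands the proposal is a plausible outline rather than a proof.
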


\begin{proposition}[\citet{tripuraneni2020theory}, Lemma 6]
  \label{prop:opt-perf-under-subopt-repr}
  Let $h, h^\ast: \reals^d \to \reals^k$ be representation functions, and define
  \[
    \Lambda(h, h^\ast) \defas \expt{h^\ast(x)h^\ast(x)^\top} - \expt{h^\ast(x)h(x)^\top}\left(\expt{h(x)h(x)^\top}\right)^\dagger\expt{h(x)h^\ast(x)^\top}.
  \]
  Then, $\inf_{v}\expt{(h(x)^\top v - h^\ast(x)^\top v^\ast)^2} = (v^\ast)^\top\Lambda(h, h^\ast)v^\ast$.
  Furthermore, if
  \[
    \sigma_{\min}(\expt{h(x)h(x)^\top}) \geq c_1 > 0 \quad \text{and} \quad \sigma_{\max}(\expt{h^\ast(x)h^\ast(x)^\top}) \leq c_2,
  \]
  then this infimum is achieved within the ball of radius $\norm[2]{v^\ast}\sqrt{c_2/c_1}$.
\end{proposition}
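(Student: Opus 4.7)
The plan is to reduce the infimum to a textbook quadratic minimization in $v$, and then use a Schur complement identity to control the norm of the optimal $v$.

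For the first claim, I would expand the squared loss into the quadratic form
\[
\meanloss(v) \defas \expt{(h(x)^\top v - h^\ast(x)^\top v^\ast)^2} = v^\top A v - 2 v^\top B v^\ast + (v^\ast)^\top C v^\ast,
\]
where $A \defas \expt{h(x)h(x)^\top}$, $B \defas \expt{h(x)h^\ast(x)^\top}$, and $C \defas \expt{h^\ast(x)h^\ast(x)^\top}$. Since $\meanloss(v)$ is convex in $v$, the first-order condition yields $A v = B v^\ast$, whose minimum-norm solution is $v_\star = A^\dagger B v^\ast$. To ensure the infimum is attained in general I would check that $B v^\ast \in \col{A}$: if $u \in \ker(A)$, then $u^\top A u = \expt{(h(x)^\top u)^2} = 0$, so $h(x)^\top u = 0$ almost surely, which forces $u^\top B = \expt{(h^\top u)(h^\ast)^\top} = 0$. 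Substituting $v_\star$ back gives
\[
\inf_v \meanloss(v) = (v^\ast)^\top C v^\ast - (v^\ast)^\top B^\top A^\dagger B v^\ast = (v^\ast)^\top \Lambda(h, h^\ast) v^\ast,
\]
which is the desired identity.

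For the norm bound, I would exploit the block PSD matrix
\[
M \defas \expt{\begin{bmatrix} h^\ast(x) \\ h(x) \end{bmatrix}\begin{bmatrix} h^\ast(x) \\ h(x) \end{bmatrix}^\top} = \begin{bmatrix} C & B^\top \\ B & A \end{bmatrix} \succeq 0.
\]
Under the hypothesis $\sigma_{\min}(A) \geq c_1 > 0$, the matrix $A$ is invertible, so the Schur complement of the $A$-block is nonnegative: $C - B^\top A^{-1} B \succeq 0$, i.e. $B^\top A^{-1} B \preceq C$. Evaluating at $v_\star = A^{-1} B v^\ast$ yields
\[
c_1 \norm[2]{v_\star}^2 \leq v_\star^\top A v_\star = (v^\ast)^\top B^\top A^{-1} B v^\ast \leq (v^\ast)^\top C v^\ast \leq c_2 \norm[2]{v^\ast}^2,
\]
giving $\norm[2]{v_\star} \leq \sqrt{c_2/c_1}\,\norm[2]{v^\ast}$, so the infimum is indeed achieved inside the stated ball.

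The computation is essentially routine; the only subtlety is justifying the pseudoinverse manipulation when $A$ is singular (relevant only for the first part, since the second assumes $A \succ 0$). I expect the kernel argument above to handle this cleanly, so there is no real obstacle — the Schur complement is the only structural ingredient beyond the first-order condition.
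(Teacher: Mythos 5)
Your proposal is correct and takes essentially the same route as the paper: your Schur-complement inequality $B^\top A^{-1}B \preceq C$ is precisely the paper's observation that $\Lambda(h,h^\ast)\succeq 0$, and your chain $c_1\lVert v_\star\rVert^2 \le v_\star^\top A v_\star = (v^\ast)^\top B^\top A^{-1}Bv^\ast \le (v^\ast)^\top C v^\ast \le c_2\lVert v^\ast\rVert^2$ mirrors the paper's displayed bound (where the final factor printed as $c_1/c_2$ is a typo for the $c_2/c_1$ you correctly obtain). You additionally work out the infimum identity itself---first-order conditions, the kernel argument showing $Bv^\ast \in \col{A}$ so the pseudoinverse solution is a genuine minimizer, and back-substitution giving $(v^\ast)^\top\Lambda(h,h^\ast)v^\ast$---which the paper omits by deferring to \citet{tripuraneni2020theory}, and that part of your argument is also sound.
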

\begin{proof}
  The calculation of the infimum is provided in \citet{tripuraneni2020theory}, and is thus omitted.
  However, we prove the sharper radius bound below.

  Define $F_{h,h'} \defas \expt{h(x)h'(x)^\top}$, so that $\Lambda(h, h^\ast) = F_{h^\ast,h^\ast} - F_{h^\ast,h}F_{h, h}^\dagger F_{h, h^\ast}$.
  Then, since $\Lambda(h, h^\ast) \succeq 0$, and recalling that the infimum is achieved at $v = F_{h, h}^\dagger F_{h, h^\ast}v^\ast$,
  \[
    \norm[2]{F_{h, h}^\dagger F_{h, h^\ast}v^\ast}^2 \leq \frac{1}{c_1}\norm[2]{F_{h, h}^{1/2}F_{h, h}^\dagger F_{h, h^\ast}v^\ast}^2 \leq \frac{1}{c_1}\norm[2]{F_{h^\ast, h^\ast}^{1/2}v}^2 \leq \frac{c_1}{c_2}\norm[2]{v^\ast}^2,
  \]
  as desired.
\end{proof}

\end{document}